\newtheorem{prop}{Proposition}
\newtheorem{lemma}{Lemma}
\newtheorem{remark}{Remark}
\renewcommand{\d}{\mathrm{d}}
\newcommand{\expect}[2]{{\ensuremath{\mathbb{E}_{#1}\left[{#2}\right]}}}
\newcommand{\prob}[1]{\ensuremath{\text{Pr}\left[#1\right]}}
\newcommand{\snr}{\text{SNR}}
\newcommand{\xfull}{\ensuremath{S_P}}
\newcommand{\xtr}{\ensuremath{S_P^\text{tr}}}
\newcommand{\xte}{\ensuremath{S_P^\text{te}}}
\newcommand{\yfull}{\ensuremath{S_Q}}
\newcommand{\ytr}{\ensuremath{S_Q^\text{tr}}}
\newcommand{\yte}{\ensuremath{S_Q^\text{te}}}
\newcommand{\ntr}{\ensuremath{n_\text{tr}}}
\newcommand{\nte}{\ensuremath{n_\text{te}}}
\newcommand{\mtr}{\ensuremath{m_\text{tr}}}
\newcommand{\mte}{\ensuremath{m_\text{te}}}
\newcommand{\coltr}[1]{{\color{black}{#1}}}
\newcommand{\colte}[1]{{\color{black}{#1}}}
\def\parminus{0.2em}
\title{%
AutoML Two-Sample Test}
\author[1]{\href{mailto:jmkuebler@tue.mpg.de}{Jonas~M.~Kübler}}
\author[1]{Vincent~Stimper}
\author[1]{Simon~Buchholz}
\author[2]{\href{mailto:muandet@cispa.de}{Krikamol~Muandet}}
\author[1]{Bernhard~Schölkopf}
\affil[1]{%
    Max Planck Institute for Intelligent Systems, T\"ubingen, Germany
}
\affil[2]{%
    CISPA - Helmholtz Center for Information Security, Saarbr\"ucken, Germany
}
\affil[ ]{%
  \texttt{\{jmkuebler,vstimper,sbuchholz,bs\}@tue.mpg.de},
  \texttt{muandet@cispa.de}
}
\begin{document}
\maketitle

\begin{abstract}

Two-sample tests are important in statistics and machine learning, both as tools for scientific discovery as well as to detect distribution shifts.
This led to the development of many sophisticated test procedures going beyond the standard supervised learning frameworks, whose usage can require specialized knowledge about two-sample testing.
We use a simple test that takes the mean discrepancy of a witness function as the test statistic and prove that minimizing a squared loss leads to a witness with optimal testing power.
This allows us to leverage recent advancements in AutoML.
Without any user input about the problems at hand, and using the same method for all our experiments, our AutoML two-sample test achieves competitive performance on a diverse distribution shift benchmark as well as on challenging two-sample testing problems. 

We provide an implementation of the AutoML two-sample test in the Python package \href{https://github.com/jmkuebler/auto-tst}{\texttt{autotst}}.
\end{abstract}

\section{Introduction}\label{sec:intro}
Testing whether two distributions are the same based on data is a fundamental problem in data science. A classical application is to test whether two differently treated groups have the same characteristics or not \citep{student1908probable, welch1947generalization, Golland2003PermutationClassifier}. Testing independence of two random variables can also be phrased as a two-sample problem by testing whether the joint distribution equals the product of the marginal distributions \citep{gretton2005measuring}. A more recent application in machine learning is to detect distribution shifts, i.e., whether the distribution a model was trained on equals the distribution the model is deployed on \citep{lipton18aBlackBox, Rabanser2019Failing, koch2022Hidden}.

Classical methods have a fixed test statistic that makes strong parametric assumptions. For example, Student's two-sample $t$-test only tests whether the distributions have equal mean, assuming both distributions follow a normal distribution with the same (but unknown) variance. With modern datasets, which are often high-dimensional, such test cannot be applied because the strong assumptions are often not justified.
Nonparametric kernel-based test such as the Maximum Mean Discrepancy (MMD) \citep{gretton2012kernel} are very flexible and, theoretically, can detect differences of any kind given enough data. 
However, this generality often harms test power at finite data size. This can simply be understood in terms of a classical bias-variance tradeoff. 
To mitigate this, it has become common to optimize the test statistic first on a held-out dataset and then apply the test only on the other part of the data \citep{sutherland2016generative, liu2020learning}. However, the derived objective as well as optimizing a kernel function are no standard tasks in machine learning and no automated packages exist, making it hard for practitioners to apply them.

Tests that fit well into the standard machine learning pipeline are based on the classification accuracy. First, a classifier is trained to detect the difference between the two samples, and then its accuracy on a held-out set is taken as a test statistic \citep{Golland2003PermutationClassifier, LopOqu2017, kim2016classification, CaiGogJia2020, HEDIGER2022}. 
\citet{liu2020learning} argued, however, that optimizing classification accuracy does not directly optimize test power and considered this one reason why kernel-based test outperform classifier tests. \citet{kubler2022witness} challenged this and considered the mean of an optimized witness function as test statistic finding that kernels are not necessary for good performance.
Generally, such two-stage procedures are very intuitive and arguably also how a human would approach the two-sample problem on complicated data. One could look at some part of the data, try to come up with a simple hypothesis, and then try to test its significance on held-out data. 

Despite the recent progress in the theoretical understanding of machine learning-based two sample tests \citep{kim2016classification, liu2020learning}, there is still little guidance on how to apply these tests in practice and a substantial amount of engineering and expertise is required to implement them. 
On the contrary, in supervised learning, namely regression and classification, the past years have shown tremendous advancements in making machine learning models applicable essentially without any expert knowledge leading to the field of Automated Machine Learning (AutoML) \citep{feurer-neurips15a, hutter2019automated, he2021automl}. 
The goal of AutoML is to automate the full machine learning pipeline: Data cleaning, feature engineering and augmentation, model search, hyperparameter optimization, and model ensembling \citep{dietterich2000ensemble}. All of it with the goal of achieving the best possible predictive performance on unseen data.

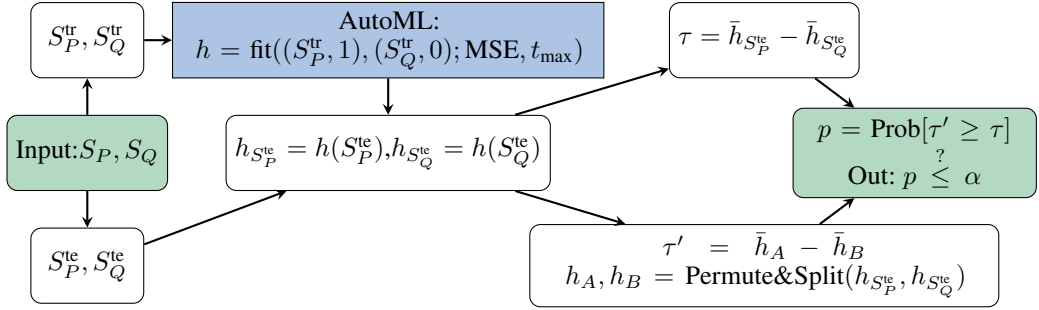
\begin{figure}[t]
    \centering
    \tikzstyle{data} = [rectangle, rounded corners, minimum width=1.5cm, minimum height=1cm,text centered, draw=black]
\tikzstyle{io} = [trapezium, trapezium left angle=70, trapezium right angle=110, minimum width=3cm, minimum height=1cm, text centered, draw=black, fill=blue!30]
\tikzstyle{process} = [rectangle, minimum width=3cm, minimum height=1cm, text centered, text width=3.5cm, draw=black]
\tikzstyle{decision} = [diamond, minimum width=3cm, minimum height=1cm, text centered, draw=black, fill=green!30]
\tikzstyle{arrow} = [thick,->,>=stealth]

\begin{tikzpicture}[node distance=2cm]
\node (start) [data, fill=ForestGreen!30] {Input:$\xfull, \yfull$};
\node (trainset) [data, right of=start,xshift = -2cm, yshift=1.5cm] {\coltr{$\xtr, \ytr$}};
\node (testset) [data, right of=start,xshift = -2cm, yshift=-1.5cm] {\colte{$\xte, \yte$}};
\node (fit) [process, right of=trainset, xshift=2cm, fill=RoyalBlue!30, text width = 5.5cm] {AutoML:\\
$\coltr{h}= \text{fit}(\coltr{(\xtr,1), (\ytr,0)}; \text{MSE}, t_\text{max})$};
\node (evaluate) [data, right of=start, xshift=2cm] {$h_{\xte}=\coltr{h}(\colte{\xte})$,$h_{\yte}=\coltr{h}(\colte{\yte})$};
\node (tau) [data, right of=evaluate, xshift=3cm, yshift=+1.5cm] {$\tau = \bar{h}_{\xte}-\bar{h}_{\yte}$ };
\node (tausim) [data, right of=evaluate, xshift=3cm, yshift = -1.5cm, text width = 6cm] {$\tau' = \bar{h}_A - \bar{h}_B$\\ $h_A,h_B = \text{Permute\&Split}(h_{\xte}, h_{\yte})$};
\node (pval) [data, right of=tau, yshift=-1.5cm, fill=ForestGreen!30,  text width = 3cm] {$p = \text{Prob}[\tau' \geq \tau]$\\
Out:\;$p \overset{?}{\leq} \alpha$};

\draw [arrow] (start) -- (trainset);
\draw [arrow] (start) -- (testset);
\draw [arrow] (trainset) -- (fit);
\draw [arrow] (testset) -- (evaluate);
\draw [arrow] (fit) -- (evaluate);
\draw [arrow] (evaluate) -- (tau);
\draw [arrow] (evaluate) -- (tausim);
\draw [arrow] (tau) -- (pval);
\draw [arrow] (tausim) -- (pval);

\end{tikzpicture}
    \caption{AutoML two-sample test: $\xfull, \yfull$ denotes the available data from $P$ and $Q$, which is first split into two parts of equal size. A witness $h:\mathcal{X}\to \mathbb{R}$ is trained using a (weighted) squared loss \cref{eq:squared_loss}, denoted by MSE, and using AutoML to maximize predictive performance. Users can easily control important properties, for example the maximal runtime $t_\text{max}$. The test statistic $\tau$ is the difference in means on the test sets. Permuting the data and recomputing $\tau$ allows the estimation of the $p$-values. The null hypothesis $P=Q$ is rejected if $p\leq \alpha$.}
    \label{fig:pipeline}
\end{figure}

The goal of our work is to bring the advancements of AutoML research to the field of two-sample testing. Our main contributions are:
\begin{enumerate}%
    \item We prove that minimizing a squared loss is equivalent to maximizing the unwieldy signal-to-noise ratio, which determines the asymptotic test power of a witness two-sample test (\cref{sec:equivalence}).
    \item Thanks to the former result we can use AutoML to learn the test statistic, thereby harnessing the power of many advancements in machine learning such as hyperparameter optimization, bagging, and ensemble learning in a user-friendly manner (\cref{sec:implementation}).
    \item Our test is usable without any specific knowledge and skills in two-sample testing. Users can easily specify how many resources they want to use when learning the test, for example the maximal training time (\cref{sec:implementation} and \cref{sec:experiments}). Furthermore, one can easily interpret the results (\cref{sec:interpretability}).
    \item We extensively study the empirical performance of our approach first by considering the two low-dimensional datasets Blob and Higgs followed by running a large benchmark on a variety of distribution shifts on MNIST and CIFAR10 data. We observe very competitive performance without any manual adjustment of hyperparameters.  Our experiments also show that a continuous witness outperforms commonly used binary classifiers (\cref{sec:experiments}).
    \item We provide the Python Package \href{https://github.com/jmkuebler/auto-tst}{\texttt{autotst}} implementing our testing pipeline.
\end{enumerate}

The proposed testing pipeline is described in \cref{fig:pipeline}: First, the two samples are split into training and test sets. 
Then a \emph{witness} function $h$ is trained by first labeling samples in $\xtr$ with $1$ and samples from $\ytr$ with $0$ and then minimizing a (weighted) Mean Squared Error (MSE) to maximize test power, see \cref{sec:autoMLtest} for further details. 
To maximize the predictive performance and to require as little user input as possible, we use AutoGluon \citep{autogluon2020}, an existing AutoML framework,  when optimizing the witness.
Our test statistic is then simply the difference in means of the test sets $\xte,\yte$, see \cref{sec:preliminaries}. $p$-values are computed via permutation of the samples \citep{Golland2003PermutationClassifier}, which is a standard technique in two-sample testing.

\section{Preliminaries}\label{sec:preliminaries}
\paragraph{Notation.}
We consider the non-empty set $\mathcal{X}\subseteq \mathbb{R}^d$ as the domain of our data.
We are given samples $\xfull = \{x_1,\dots, x_n\}$ and $\yfull=\{y_1, \dots, y_m\}$, which are i.i.d.~ realizations of the random variables $X$ and $Y$ distributed according to $P$ and $Q$.  
Let us define $c= \frac{n}{n+m}$. 
The proposed approach splits the data into disjoint training and test sets of size $\ntr, \nte, \mtr, \mte$ that we denote $\xtr, \xte$ and $\ytr, \yte$. Unless otherwise stated, we assume that the data is split in equal halves, which is the default approach \citep{LopOqu2017, liu2020learning}.

Our goal is to test the null hypothesis $H_0: P=Q$ against the alternative hypothesis $P\neq Q$. 
Our hypothesis test rejects when the observed value of the test statistic is significantly larger than what we would expect if the null hypothesis were true. Naturally, such tests can make two types of errors:
\begin{itemize}%
    \item[]{\bfseries Type-I}: The test rejects the null hypothesis, although it is true.
    \item[]{\bfseries Type-II}: The test fails to reject, although the null hypothesis is false.
\end{itemize}
Our goal is to design a test that controls the rate of Type-I errors at a given significance level $\alpha\in (0,1)$, and maximizes the test \emph{power} defined as 1 minus the probability of a Type-II error.

\paragraph{Witness two-sample test.}
We consider a witness-based hypothesis test \citep{CheClo2019, kubler2022witness}. Given a function $h:\mathcal{X} \to \mathbb{R}$, called \emph{witness}, the \emph{mean discrepancy} is 
\begin{align}
    \tau(P, Q \mid h) =  \expect{X\sim P}{h(X)} - \expect{Y\sim Q}{h(Y)},
\end{align}
and we use its empirical estimate on the test set as test statistic 
\begin{align}\label{eq:test_estimate}
    \tau(\xte, \yte \mid h) = \frac{1}{\nte} \sum_{x\in \xte} h(x) - \frac{1}{\mte} \sum_{y\in \yte} h(y).
\end{align}
As we show in \cref{sec:related}, this test statistic can be seen as a continuous extension of classifier two-sample tests \citep{LopOqu2017}.
We assume that $c=\frac{\nte}{\nte +\mte}$ converges to a constant as $n,m \to \infty$. With $\sigma_c^2(h) = \frac{(1-c)\text{Var}_{X\sim P}\left[h(X)\right] + c\text{Var}_{Y\sim Q}\left[h(Y)\right]}{c(1-c)}$  we have that the test statistic is asymptotically normally distributed \citep[Theorem 1]{kubler2022witness}
\begin{align}\label{eq:asymptotic_normal}
        \sqrt{\nte +\mte} \left[\tau(\xte, \yte \mid h ) - \tau(P,Q\mid h)\right] \overset{d}{\to} \mathcal{N}\left(0, \sigma_c^2(h)\right).
\end{align}

Let us for now assume that we know $\sigma_c(h)$. 
For any level $\alpha\in (0,1)$ we can set the \emph{analytic} test threshold to 
    $t_\alpha = \frac{\sigma_c(h)}{\sqrt{\nte+\mte}} \Phi^{-1} (1-\alpha)$,
where $\Phi$ denotes the CDF of a standard normal and $\Phi^{-1}$ its inverse. 
We can then compute the asymptotic probability of rejecting as:
\begin{align}\label{eq:prob_rej}
\begin{aligned}
    &\prob{\text{reject}} = \prob{ \tau(\xte, \yte |h)  > t_\alpha } \to \Phi\left(\sqrt{\nte + \mte}\frac{\tau(P,Q\mid h)}{\sigma_c(h)}- \Phi^{-1}(1-\alpha)\right).
\end{aligned}
\end{align}
Under the null hypothesis $P=Q$ we have $\tau(P,Q\mid h) =0$. Therefore, \cref{eq:prob_rej} reduces to $\Phi\left(- \Phi^{-1}(1-\alpha)\right) = 1-\Phi\left(\Phi^{-1}(1-\alpha)\right)= \alpha$. Hence, the asymptotic test correctly controls Type-I error. 
On the other hand, for given $P\neq Q$, \cref{eq:prob_rej} corresponds to the test power. Since $\Phi$ is a monotonically increasing function, the test power is maximized by the witness $h$ that maximizes
\begin{align}\label{eq:snr}
    \text{SNR}(h) = \frac{\tau(P,Q \mid h)}{\sigma_c(h)},
\end{align}
where $\text{SNR}$ is the Signal-to-Noise Ratio. 
From \cref{eq:prob_rej} it follows that the overall testing pipeline is consistent (approaches power 1) if we find a witness function with $\text{SNR} > \varepsilon >0$ with probability going to 1 as the training data set size grows (see \cref{app:consistency} for more discussion).
\citet{kubler2022witness} showed the optimal witness can be learned when using kernel methods (using kernel Fisher Discriminant Analysis), but it was left open how this can be done efficiently with other machine learning frameworks. 
Such an SNR is not commonly implemented and also common approaches like mini-batching are not easily adapted, as a plug-in estimate of the SNR based on a mini batch would be a biased estimate.
In the next section, we show how to circumvent this and optimize a squared loss instead.

\section{The AutoML two-sample test}\label{sec:autoMLtest}
\subsection{Equivalence of squared loss and signal-to-noise ratio}\label{sec:equivalence}
Since it is known for linear models that minimizing a squared loss over two labelled samples is equivalent to Fisher Discriminant Analysis \citep{DudaHS2001, Mika_PhD2003}, we attempt to find a more general relation between the squared loss and the SNR. Our goal is to use the squared loss as the optimization objective when learning the witness.
Let $c=\frac{\ntr}{\ntr + \mtr}$ analogously to the above. Let us mark all data from $P$ with a label '$1$' and all data from $Q$ with a label '$0$'. We define the following (weighted) squared loss
\begin{align}\label{eq:squared_loss}
        L_{P,Q,c}(h) = \quad&(1-c)\, \expect{X\sim P}{(1-h(X))^2}   + c\, \expect{Y\sim Q}{(0 - h(Y))^2}.
\end{align}
Note that the weights $(1-c)$ and $c$ are swapped as it will be more important to fit the set with fewer samples.
Given a function $h$,  notice that shifting and scaling it leaves the SNR \eqref{eq:snr} invariant. 
We can then show the following relationship of its squared loss and its SNR.
\begin{lemma}\label{lem:equivalence}
Let the function $h$ be fixed. We apply the linear transformation $h \to \gamma h + \nu$ with $\gamma \in \mathbb{R}$ and $\nu \in \mathbb{R}$. Let $(\gamma^*, \nu^*)$ be the minimum of the quadratic function $(\gamma, \nu) \mapsto L_{P,Q,c}(\gamma h + \nu)$. Then, the following holds true (Proof in \cref{app:equivalence}):
\begin{align*}
        L_{P,Q,c}(\gamma^* h + \nu^*) = \frac{c(1-c)}{1+\text{SNR}(h)^2}.
\end{align*}
\end{lemma}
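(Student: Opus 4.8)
The plan is to collapse the two-variable optimization to a one-variable quadratic via the bias--variance decomposition, and then recognize the answer. Write $\mu_P = \mathbb{E}_{X\sim P}[h(X)]$, $\mu_Q = \mathbb{E}_{Y\sim Q}[h(Y)]$, $v_P = \text{Var}_{X\sim P}[h(X)]$, $v_Q = \text{Var}_{Y\sim Q}[h(Y)]$, and set $\delta = \mu_P - \mu_Q = \tau(P,Q\mid h)$ and $V = (1-c)v_P + c\,v_Q = c(1-c)\,\sigma_c^2(h)$, so that $\text{SNR}(h)^2 = \delta^2/\sigma_c^2(h) = c(1-c)\delta^2/V$. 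Since $\gamma h + \nu$ has mean $\gamma\mu_P + \nu$ and variance $\gamma^2 v_P$ under $P$ (and mean $\gamma\mu_Q + \nu$, variance $\gamma^2 v_Q$ under $Q$), the identity $\mathbb{E}[(a - Z)^2] = \text{Var}[Z] + (a - \mathbb{E}[Z])^2$ applied to each term of \eqref{eq:squared_loss} gives
\[
L_{P,Q,c}(\gamma h + \nu) = \gamma^2 V + (1-c)\bigl(1 - \gamma\mu_P - \nu\bigr)^2 + c\bigl(\gamma\mu_Q + \nu\bigr)^2 .
\]

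Next I would minimize over $\nu$ for fixed $\gamma$. Setting $\partial_\nu L = 0$ yields $\nu^* = (1-c) - \gamma\bigl((1-c)\mu_P + c\mu_Q\bigr)$, and substituting back produces the clean cancellations $1 - \gamma\mu_P - \nu^* = c\,(1 - \gamma\delta)$ and $\gamma\mu_Q + \nu^* = (1-c)\,(1 - \gamma\delta)$. Plugging these in, the two weighted squares recombine into $(1-c)c^2(1-\gamma\delta)^2 + c(1-c)^2(1-\gamma\delta)^2 = c(1-c)(1-\gamma\delta)^2$, so $L$ as a function of $\gamma$ alone is the scalar quadratic $f(\gamma) = \gamma^2 V + c(1-c)(1 - \gamma\delta)^2$. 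Minimizing $f$ is elementary: $f'(\gamma^*)=0$ gives $\gamma^* = c(1-c)\delta/\bigl(V + c(1-c)\delta^2\bigr)$, hence $1 - \gamma^*\delta = V/\bigl(V + c(1-c)\delta^2\bigr)$, and a one-line computation gives $f(\gamma^*) = c(1-c)V/\bigl(V + c(1-c)\delta^2\bigr)$. Dividing numerator and denominator by $V$ and using $c(1-c)\delta^2/V = \text{SNR}(h)^2$ yields exactly $c(1-c)/(1+\text{SNR}(h)^2)$.

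Finally I would tidy up two points. First, $(\gamma,\nu) \mapsto L_{P,Q,c}(\gamma h + \nu)$ is a convex quadratic (a nonnegative combination of squares), with $\partial^2_{\nu\nu}L = 2$ and $f''(\gamma) = 2V + 2c(1-c)\delta^2 > 0$ whenever $\sigma_c(h) > 0$, so the critical point found above is the unique global minimizer $(\gamma^*,\nu^*)$; the degenerate case $\sigma_c(h)=0$ (i.e. $h$ constant $P$- and $Q$-a.s.) is excluded because then $\text{SNR}(h)$ is undefined and the statement is vacuous. I do not expect a real obstacle here — it is careful bookkeeping — but the one step that must be done correctly is the elimination of $\nu$: the asymmetric weights $1-c$ and $c$ in \eqref{eq:squared_loss} are precisely what make the two residual squares collapse to a single $c(1-c)(1-\gamma\delta)^2$ term, and getting that collapse (and its signs) right is the crux of the argument.
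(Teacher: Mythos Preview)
Your proof is correct. It differs from the paper's in a way worth noting. The paper absorbs the minimizer into $h$ by renaming (assuming without loss of generality that $(\gamma^*,\nu^*)=(1,0)$), then uses the two first-order conditions $\partial_\gamma L=\partial_\nu L=0$ as algebraic identities relating the raw moments $\mathbb{E}_P[h]$, $\mathbb{E}_Q[h]$, $\mathbb{E}_P[h^2]$, $\mathbb{E}_Q[h^2]$; it computes $L(h)$ and $c(1-c)\sigma_c^2(h)$ separately in terms of these moments and checks that their ratio equals $1/(1+\mathrm{SNR}^2)$. Your route is more constructive: the bias--variance split cleanly isolates the variance contribution $\gamma^2 V$ up front, so that after eliminating $\nu$ the problem collapses to the scalar quadratic $\gamma^2 V + c(1-c)(1-\gamma\delta)^2$, whose minimum value $c(1-c)V/(V+c(1-c)\delta^2)$ is immediately recognizable as the claimed expression. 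Your approach also produces the explicit minimizer $(\gamma^*,\nu^*)$ as a byproduct and makes the role of the asymmetric weights $(1-c)$ and $c$ transparent (they are exactly what forces the two residual squares to coalesce into a single $c(1-c)(1-\gamma\delta)^2$ term). The paper's approach avoids the variance split and is slightly shorter, at the cost of manipulating raw second moments and being less illuminating about why the identity holds.
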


Let us assume that the supports of the two distributions $P,Q$ overlap. %
Hence, for any function $h$ the loss $L_{P,Q,c}$ is strictly positive. 
Assume that $h^*$ is the function that minimizes the loss over all possible functions. 
This implies that $\gamma^*=1$ and $\nu^*=0$, as otherwise one could still improve the loss by scaling or shifting.
Thus, by Lemma \ref{lem:equivalence} we have:
\begin{prop}\label{prop:minimizer}
Assume that $h^*$ minimizes the squared loss \eqref{eq:squared_loss}. Then $h^*$  maximizes the signal-to-noise ratio, i.e.,
\begin{align*}
    L(h^*) = \min_{h} L(h)\, \Rightarrow \,\snr(h^*) = \max_{h} \snr(h).
\end{align*}
\end{prop}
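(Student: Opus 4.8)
The plan is to obtain \cref{prop:minimizer} directly from \cref{lem:equivalence} together with the already-noted invariance of $\snr$ under shifting and scaling. The one extra structural input is that, when the supports of $P$ and $Q$ overlap, $L_{P,Q,c}(h) > 0$ for every $h$: no function can be $1$ on $P$-almost every point and $0$ on $Q$-almost every point. (Overlapping supports also rules out the degenerate case $\sigma_c(h)=0$, $\tau(P,Q\mid h)\neq 0$, since a function that is constant on $\mathrm{supp}\,P$ and on $\mathrm{supp}\,Q$ must take the same value on the overlap, hence $\tau(P,Q\mid h)=0$; so $\snr(h)$ is well defined and the right-hand side of \cref{lem:equivalence} is finite and positive.)

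First I would pin down the loss value at the global minimizer. For fixed $h$ write $\phi_h(\gamma,\nu) = L_{P,Q,c}(\gamma h + \nu)$, a convex quadratic in $(\gamma,\nu)$ whose minimum equals $c(1-c)/(1+\snr(h)^2)$ by \cref{lem:equivalence}. Let $h^*$ minimize $L_{P,Q,c}$ over all functions. Because $\{\gamma h^* + \nu : \gamma,\nu \in \mathbb{R}\}$ is a subfamily of all functions, global optimality gives $L_{P,Q,c}(h^*) \le \phi_{h^*}(\gamma,\nu)$ for all $\gamma,\nu$; on the other hand $L_{P,Q,c}(h^*) = \phi_{h^*}(1,0) \ge \min_{\gamma,\nu}\phi_{h^*}(\gamma,\nu)$. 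Hence $(1,0)$ minimizes $\phi_{h^*}$ — this is the precise form of ``scaling or shifting $h^*$ cannot help'' — and therefore $L_{P,Q,c}(h^*) = c(1-c)/(1+\snr(h^*)^2)$.

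Second, I would compare with an arbitrary competitor $h$. Global optimality of $h^*$ and \cref{lem:equivalence} give $L_{P,Q,c}(h^*) \le \min_{\gamma,\nu} L_{P,Q,c}(\gamma h + \nu) = c(1-c)/(1+\snr(h)^2)$. Combining this with the previous equality and cancelling $c(1-c)>0$ yields $1+\snr(h^*)^2 \ge 1+\snr(h)^2$, i.e.\ $\snr(h^*)^2 \ge \snr(h)^2$ for every $h$. Since $\snr(-h) = -\snr(h)$, this shows $\snr(h^*)^2 = \max_h \snr(h)^2$, so $|\snr(h^*)| \ge \snr(h)$ for all $h$; it remains only to fix the sign of $\snr(h^*)$.

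The sign is the only place needing care, and I would dispatch it with the label-flip symmetry. If $\snr(h^*)<0$, put $g = 1 - h^*$; then $\tau(P,Q\mid g) = -\tau(P,Q\mid h^*)$ and $\sigma_c(g) = \sigma_c(h^*)$, so $\snr(g) = -\snr(h^*) > 0$ with $\snr(g)^2 = \snr(h^*)^2$, and by \cref{lem:equivalence} the optimal linear rescaling of $g$ attains loss $c(1-c)/(1+\snr(g)^2) = L_{P,Q,c}(h^*) = \min_h L_{P,Q,c}(h)$; hence we may always take the minimizer to have $\snr \ge 0$. (Equivalently, from the stationarity equations of $\phi_{h^*}$ at its minimizer $(1,0)$ one checks that the optimal scale has the sign of $\tau(P,Q\mid h^*)$, forcing $\tau(P,Q\mid h^*)>0$.) Either way $\snr(h^*)\ge 0$, and together with $\snr(h^*)^2 \ge \snr(h)^2$ this gives $\snr(h^*) \ge \snr(h)$ for all $h$, which is the assertion. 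The only real obstacle is this sign bookkeeping plus making the ``scaling/shifting cannot help'' step rigorous through the subfamily argument; beyond that the proof is immediate from \cref{lem:equivalence}.
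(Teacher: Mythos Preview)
Your proof is correct and follows essentially the same approach as the paper: both use \cref{lem:equivalence} together with the observation that a global minimizer $h^*$ must already be optimal within its own affine family $\{\gamma h^*+\nu\}$, and then compare against an arbitrary competitor to force $\snr(h^*)^2\ge \snr(h)^2$. Your treatment is more careful on two points the paper glosses over --- you make the ``scaling/shifting cannot help'' step rigorous via the subfamily argument, and you justify the sign $\snr(h^*)\ge 0$ explicitly through the reflection $g=1-h^*$, whereas the paper simply asserts $\expect{P}{h^*(X)}\ge \expect{Q}{h^*(Y)}$ without proof --- but the overall logic is the same.
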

\begin{proof}
A solution that minimizes the loss has $\expect{X\sim P}{h^*(X)} \geq \expect{Y\sim Q}{h^*(Y)}$ and hence a non-negative SNR. Assume there exists $\tilde{h}$ such that $\snr(\tilde{h}) >  \snr(h^*)$. Then Lemma \ref{lem:equivalence} implies the existence of $\tilde{\gamma}, \tilde{\nu}$ such that $L(\tilde{\gamma}\tilde{h} + \tilde{\nu}) < L(h^*)$, which is a contradiction.
\end{proof}
We can further derive a closed-form solution for the population optimal witness:
\begin{prop}[Optimal Witness]\label{prop:optimal_witness}
Assume $P$ and $Q$ have densities $p(x)$ and $q(x)$. The function minimizing \cref{eq:squared_loss} is
\begin{align}\label{eq:optimal_witness}
    h^*(x) = \frac{(1-c)p(x)}{(1-c)p(x) + c\, q(x)}.
\end{align}
\end{prop}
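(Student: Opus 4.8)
The plan is to minimize the loss pointwise. Writing $P$ and $Q$ in terms of densities with respect to a common dominating measure $\mu$ (e.g.\ Lebesgue measure, under the assumption of \cref{prop:optimal_witness}), the objective \eqref{eq:squared_loss} collapses into a single integral,
\begin{align*}
L_{P,Q,c}(h) = \int_{\mathcal{X}} \Big[(1-c)\,(1-h(x))^2\, p(x) + c\, h(x)^2\, q(x)\Big]\,\d \mu(x),
\end{align*}
where the interchange is immediate since the integrand is nonnegative. Because this integrand depends on $h$ only through the scalar value $h(x)$, it suffices to minimize, for each fixed $x$, the one-variable function $t \mapsto \phi_x(t) := (1-c)(1-t)^2 p(x) + c\, t^2 q(x)$ over $t\in\mathbb{R}$; if the resulting minimizer $x\mapsto t^*(x)$ is measurable, it minimizes $L_{P,Q,c}$ over all measurable functions.

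Next I would carry out this scalar optimization. For every $x$ with $(1-c)p(x) + c\,q(x) > 0$, the map $\phi_x$ is a strictly convex quadratic, so it has a unique minimizer obtained from $\phi_x'(t) = -2(1-c)(1-t)p(x) + 2c\,t\,q(x) = 0$, which rearranges to $t^*(x) = \tfrac{(1-c)p(x)}{(1-c)p(x) + c\,q(x)}$, exactly \eqref{eq:optimal_witness}. On the set $\{x: p(x) = q(x) = 0\}$ the integrand vanishes for every choice of $h(x)$, so the value there is immaterial and the expression \eqref{eq:optimal_witness} may be adopted by convention; measurability of $h^*$ is clear, being a ratio of measurable functions with a denominator that is positive wherever it matters. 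Hence $h^*$ attains the pointwise infimum $\mu$-a.e.\ and therefore $\inf_h L_{P,Q,c}(h) = L_{P,Q,c}(h^*)$.

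Finally I would record consistency checks that double as a sanity proof: $h^*$ takes values in $[0,1]$, matching the intuition that the optimal witness interpolates the labels $\{0,1\}$, and substituting $h^*$ back (using $\expect{X\sim P}{h^*(X)} = \int h^*(x)p(x)\,\d\mu(x)$, and similarly for the second moments) one can verify $L_{P,Q,c}(h^*) = \tfrac{c(1-c)}{1+\snr(h^*)^2}$, consistent with \cref{lem:equivalence} at $(\gamma^*,\nu^*) = (1,0)$ and with \cref{prop:minimizer}. The only genuine subtlety is the measure-theoretic bookkeeping — fixing a common dominating measure, justifying the reduction to a single integral (automatic by nonnegativity), and disposing of the set where both densities vanish — but none of this is a real obstacle; the core of the argument is the elementary convex-quadratic minimization in one variable.
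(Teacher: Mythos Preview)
Your proposal is correct and follows essentially the same approach as the paper: rewrite the loss as a single integral $L(h)=\int_{\mathcal{X}}\big[(1-c)p(x)(1-h(x))^2+c\,q(x)h(x)^2\big]\,\d x$ and minimize the integrand pointwise to obtain \eqref{eq:optimal_witness}. The paper's proof is just those two sentences; your version adds the measure-theoretic bookkeeping (common dominating measure, the $p=q=0$ null set, measurability) and the sanity checks, but the core argument is identical.
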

\begin{proof}
We rewrite \cref{eq:squared_loss} as $L(h) = \int_\mathcal{X} (1-c)p(x) (1-h(x))^2 + c\,q(x) h^2(x)\, dx$. 
Minimizing the integrand for each $x$ yields the claimed result. A similar result was obtained by \citet{lsGAN}.
\end{proof}
\begin{remark}\label{eq:remark_cross_ent}
Consider the balanced case $c=1/2$, i.e., equal prior probabilities of labels '1' and '0'. Then $h^*(x)$ is the posterior probability that the example $x$ came from $P$, or, using our defined labels, $h^*(x) = \prob{1 | x}$. Thus, minimizing a log loss, i.e. the binary cross-entropy, and using its output probability for class 1 as witness function also maximizes test power.
\end{remark}
Notice that for $c\neq 1/2$, we need to weight our samples with the inverse weights, i.e., it is more important to get the less frequent samples right.

Proposition \ref{prop:minimizer} and \cref{eq:remark_cross_ent} lead to our main conclusion: \emph{To find an optimal witness, we can simply optimize the (weighted) squared error or a cross-entropy loss.} This allows us to seamlessly integrate existing AutoML frameworks, which are designed to solve this task in an automated fashion, to learn powerful witnesses. In the following we mainly focus on the squared error.

\subsection{Practical implementation}\label{sec:implementation}
\paragraph{Stage I - optimization.}
In the first stage, we optimize the witness function to minimize the MSE via the training data $\xtr$ and $\ytr$, as motivated in the previous section. 
We simply label the data with $1$ or $0$ depending on whether they come from $P$ or $Q$. We can then use any library that implements an optimization of a squared loss. If $c\neq 1/2$ we additionally need to specify weights according to \cref{eq:squared_loss}.
Note that, unsurprisingly, the relevant quantity for the test power is the loss on the test data and not on the training data. Thus, it is of crucial importance to find a witness with good generalization performance.
To make this as simple as possible for practitioners, we propose to use an AutoML framework.
This also has the advantage that users can specify runtime and memory limits, and can explicitly trade computational resources for better statistical significance.

Although we strongly argue towards using AutoML for the test, this can of course not circumvent the no-free-lunch theorem. Thus, whenever users have good intuition about how their two samples might differ, we strongly encourage taking this into account when designing the test. To put it to the extreme: If one knows that their (one-dimensional) data follows a normal distribution and only differs in mean (if at all), one should use a classic $t$-test rather than our approach.

\paragraph{Stage II - testing.}
Given a witness function $h$ learned as detailed in the previous section, we compute the test statistic as in \cref{eq:test_estimate}. To compute a $p$-value or decide whether to reject the null hypothesis $P=Q$, we can either approximate the asymptotic distribution or use permutations \citep{kubler2022witness}.
To estimate an asymptotically valid $p$-value\footnote{Asymptotic $p$-values are strictly speaking only valid for fixed $h$ as the size of $\xte, \yte$ goes to infinity.} we first estimate $\sigma_c^2(h)$ (see \cref{eq:asymptotic_normal}) based on $\xte, \yte$, which we denote as $\hat{\sigma}_c^2(h)$. The $p$-value is then given as $1-\Phi\left(\sqrt{\nte+\mte}\tau(\xte, \yte | h) / \hat{\sigma}_c(h)\right)$. 

For two-sample tests, a cheap alternative that guarantees correct Type-I error control even at finite sample size is based on permutations \citep{Golland2003PermutationClassifier}. In case of witness functions, one can simply permute the values $h(x_1),\ldots, h(x_{\nte}), h(y_1), \ldots, h(y_{\mte})$ and split them in two sets of size $\nte$ and $\mte$, respectively. One then recomputes the test statistic over $B\in \mathbb{N}$ iterations. When the permuted test statistic was $T$ times at least as extreme as the original,  one needs to use a biased estimator $p= \frac{T + 1}{B+1}$ to control the Type-I error \citep{phipson2010permutation}.
 We reject whenever $p \leq \alpha$.
 We emphasize that we do not need to retrain the model, and it even suffices to evaluate the witness once on all elements of the test sets. We can then directly permute the witness' values.

\paragraph{Runtime.}
The overall runtime of the AutoML based witness test is the sum of the runtimes of the training phase, the evaluation of the witness, and the evaluation of the test statistic.
We denote the scaling of the former by $s_\text{train}[\ntr +\mtr]$, where the square brackets indicate a functional dependency. It will depend on the AutoML framework but can usually be controlled by setting a time limit. Even with a limit of one minute or less AutoGluon can already train powerful models on large datasets and even performs model-selection, hyperparameter optimization, and so on. In contrast, deep kernel-based methods typically train a neural network with a fixed architecture, which can be expensive. Although neural networks belong to the suite of models AutoGluon trains, they are optimized for speed and if the runtime limit does not permit training them another faster model will be selected.

The scaling of evaluating $h$, denoted by $s_\text{eval}[\ntr +\mtr]$, is usually linear in the dataset size, but it can be sublinear if the evaluation is parallelized. It can also be controlled with AutoGluon by using different hyperparameter presets which might optimize the model selection towards fast inference times. Compared to that, kernel-based tests have a quadratic runtime. Furthermore, the test statistic has to be evaluated on the original partition of the data as well as $B$ permutations requiring $(\ntr +\mtr)(B+1)$ steps. In practice, this is usually the cheapest step, but it could also be further reduced by parallelization. The overall runtime is given by
\begin{align}
    O\left(s_\text{train}[\ntr +\mtr] + s_\text{eval}[\nte + \mte] + (\nte + \mte)(B+1)\right).
\end{align}
Generally, training the witness will be the most expensive step of our test. A main advantage of our test over others is that practitioners can easily trade-off spending more time and resources on the training phase to potentially get a better witness and thus to more significant results. Thanks to AutoML, specifying the time and resources does not require any detailed knowledge of the underlying algorithm and is hence easily done. 
\subsection{Interpretability }\label{sec:interpretability}
\begin{figure}
    \centering
    \includegraphics[width=.48\textwidth]{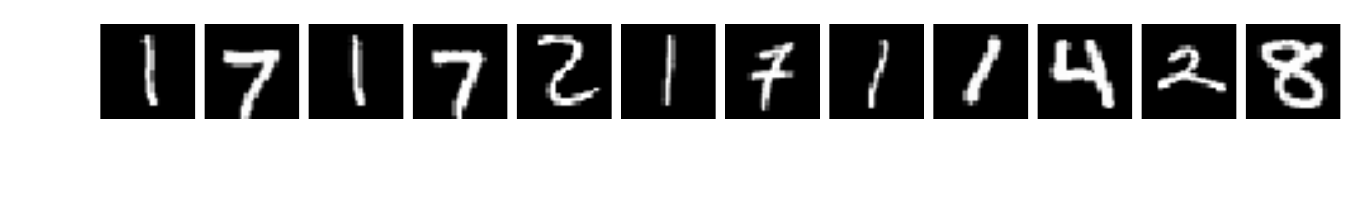}
    \hfill
    \includegraphics[width=.48\textwidth]{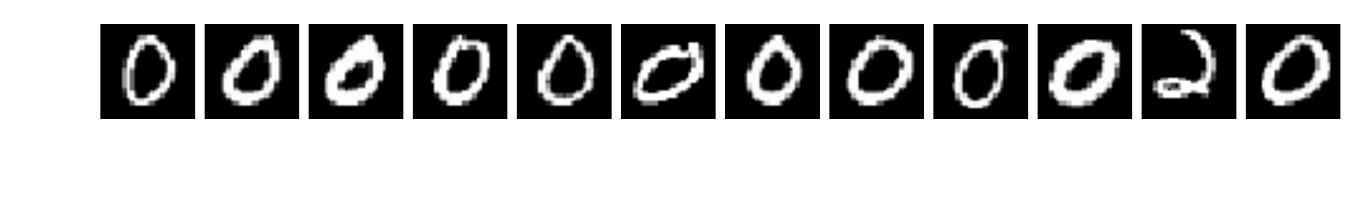}
    \caption{Testing MNIST against shifted MNIST with '0's \emph{knocked out}. The optimized witness assigns the highest values to the images on the left, and lowest values to the images on the right, allowing us to interpret the difference.}
    \label{fig:interpret}
\end{figure}
Suppose our test finds a significant difference between $\xfull$ and $\yfull$. An additional task would be to \emph{interpret} how the distributions differ. 
This is particularly simple in our framework and shown in \cref{fig:interpret}: We can check which examples attained the highest value of the witness to find which inputs are much more likely under $P$ than under $Q$. 
On the other hand, inputs with  small witness values are more likely under $Q$.
Similar procedures were used in \citet{JitSzaChwGre2016, LopOqu2017, Rabanser2019Failing}. An additional advantage of using the AutoML framework AutoGluon is that it allows to compute feature importance values easily. Therefore, for datasets which are hard to visualize the important features of data points with high or low witness values can be identified.

\section{Relation to prior work}\label{sec:related}
\citet{gretton2012kernel} introduced the maximum mean discrepancy as a test statistic for two-sample testing. For a given reproducing kernel Hilbert space $\mathcal{H}$, the maximum mean discrepancy is  
\begin{align}
    \text{MMD} = \max_{h\in\mathcal{H}, 
    \|h\|\leq 1} \expect{X\sim P}{h(X)} - \expect{Y\sim Q}{h(Y)}.
\end{align}
Commonly an empirical estimate of the squared MMD is taken as test statistic and thresholds estimated via permutations. 
The connection to our test is quite apparent, since both use the mean discrepancy as test statistic - solely that the MMD optimizes the function over the RKHS unit ball, and the witness test tries to maximize the test power when learning the function on a held-out data set.
The past years have shown that learning the kernel in a data-driven manner improves testing performance \citep{Gretton2012optimal, sutherland2016generative, liu2020learning, kbler2020learning, Liu2021Meta}. 

\citet{schrab2021mmd}  test with a finite collection of different kernels and reject if one of these MMD-based tests rejects. To ensure correct Type-I error control, they need to \emph{aggregate} the test and modify the test thresholds to account for the multiple testing (MMDAgg). This allows them to use the full dataset, without having to split in train and test sets, but in turn this only enables using a countable candidate set. Other approaches rely on data splitting: \citet{liu2020learning} proposed to learn a deep kernel, by using an asymptotic test power criterion \citep{sutherland2016generative} and considering a rather involved kernel function ({MMD-D}), see their Eq.~(1). They concluded that this is often better than learning a one-dimensional representation, like a classifier does. In \cref{app:implicationsMMD} we show that our results in \cref{sec:equivalence} similarly apply to learning kernels. Concretely, one can also use a squared loss or cross-entropy loss when optimizing the kernel and  the asymptotically optimal kernel is given as 
\begin{align}
    k^*(x,x') = h^*(x)h^*(x'), 
\end{align}
with $h^*$ given in \cref{eq:optimal_witness}.

\citet{kubler2022witness} questioned the insight of \citet{liu2020learning} that one should learn a kernel, by showing that learning a simple witness function via a test power criterion often suffices. They showed how to use cross-validation and kernel Fisher Discriminant Analysis (kfda) to find powerful witness functions ({kfda-witness}), which serves as the blueprint for our more general approach. 
\citet{ChwialkowskiRSG15} proposed another kernel-based fast two-sample test with smooth characteristic functions ({SCF}) and projected mean embeddings ({ME}), which was refined by \citet{JitSzaChwGre2016} who optimized this test statistic in the first stage. \citet{KirchlerKKL20} trained a deep multidimensional representation and used its mean distance as test statistic. Recently, \citet{zhao2021comparing} proposed a general framework that also includes MMD and 
\citet{shekhar2022permutation} proposed another variant of MMD two-sample tests.

Classifier two-sample tests (C2ST) also rely on a data splitting approach and have extensively been studied in the literature \citep{Friedman03:Tests,Golland2003PermutationClassifier, LopOqu2017, kim2016classification, CaiGogJia2020, HEDIGER2022}. For simplicity, we focus on the balanced case. A C2ST trains a classifier with $\xtr$, labelled with '1' and  $\ytr$ labelled with '0' and then estimates its classification accuracy on $\xte, \yte$. If the estimated accuracy is significantly above chance (that's what it would be under the null hypothesis), the test rejects.
Let $f:\mathcal{X} \to \{0,1\}$ denote the binary classifier, then we can write the accuracy as $\frac{1}{2} + \varepsilon$  and estimate it as
\begin{align*}
        \frac{1}{2} + \hat{\varepsilon} &= \frac{1}{2} \left( \frac{1}{\nte} \sum_{i=1}^{\nte} f(x_i) + \frac{1}{\nte} \sum_{i=1}^{\nte} (1-f(y_i))\right)
    = \frac{1}{2} + \frac{1}{2} \left( \frac{1}{\nte} \sum_{i=1}^{\nte} f(x_i) - \frac{1}{\nte} \sum_{i=1}^{\nte} f(y_i)\right)\\
    &= \frac{1}{2} + \frac{1}{2} \tau(\xte, \yte \mid f).
\end{align*}
Thus using the classification accuracy as test statistic is equivalent to using the mean discrepancy as test statistic with the binary classifier $f$ as witness function in \cref{eq:test_estimate}.
However, using binary classifiers is quite limiting and results in quite high variance. Using continuous witness functions allows for higher power.\footnote{\citet{LopOqu2017} also observe that using a binary classifier might be too restrictive (see their Remark 2), but they did not investigate this in detail.} An alternative to using the mean discrepancy is to rank the test data under the witness function and apply a Mann-Whitney test \citep{vayatis2009auc}. Some might also speak of 'classifier' test when referring to a witness test, but using the term 'witness' emphasizes that it is continuous. 

\citet{lipton18aBlackBox} proposed to use a pretrained classifier to detect label shift. \citet{Rabanser2019Failing} extended this to detect covariate shift. They investigate different ways of reducing the dimensionality and then applying different (classical) hypothesis test on them. While they also consider a basic C2ST, their best performing method uses the softmax outputs of a pretrained image classifier. They then run a \emph{univariate} Kolmogorov-Smirnov test on each of the output 'probabilities' separately and correcting via Bonferroni correction. We refer to this as (univariate) BBSDs (black box shift detection - soft). For more details on their other methods, we refer the reader to their work directly. 

\section{Experiments}\label{sec:experiments}
To show the power of utilizing AutoML we use the same setup for all datasets we consider. The data is split into two equally sized parts since this is the standard approach \citep{LopOqu2017, liu2020learning, Rabanser2019Failing}. We label data from $P$ with '1', data from $Q$ with '0' and fit a least square regression with AutoGluon's \texttt{TabularPredictor} \citep{autogluon2020}.\footnote{We also run one experiment with Auto-Sklearn, see \cref{tab:shift_results_overview_app}. A recent benchmark of AutoML frameworks can be found in \citet{gijsbers2022amlb}.}
We use the configuration \texttt{presets='best\_quality'} and by default optimize with a five-minute time limit. For more details, we refer to the \href{https://auto.gluon.ai/stable/index.html}{AutoGluon documentation}.
We run all experiments with significance level $\alpha=5\%$. Results of correct Type-I error control are provided in \cref{app:further_experiments}. The sample size we report is always the size of the datasets before splitting, i.e, $n=m$, since we only consider balanced problems.

All experiments were done on servers having only CPUs and we spend around 100k CPU hours on doing all the experiments reported in the paper, which is mainly because we did various configurations and many repetitions for all the test cases we consider. Further details are given in \cref{app:further_experiments}.

\paragraph{Blob \& Higgs.}We first compare the performance on two low-dimensional datasets used to examine the power of two-sample tests. Following \citet{liu2020learning} we consider the Blob dataset, which is a mixture of nine Gaussian modes, with different covariance structures between $P$ and $Q$ \citep[Fig.~1]{liu2020learning}. 
We also consider the Higgs dataset \citep{baldi2014searching}, which was introduced by \citet{ChwialkowskiRSG15} as a two-sample problem.
As baselines, we use {MMD-D}, {ME}, {SCF}, and {kfda-witness} as reported by \citet{kubler2022witness}. We report the results in \cref{fig:blobAndHiggs}, where $\pm 1$ standard error are shown as shaded regions. Since we estimated the performance over 500 runs, we obtain a smaller error than the other methods.
\begin{figure}[t]
    \centering
    \includegraphics[width=\linewidth]{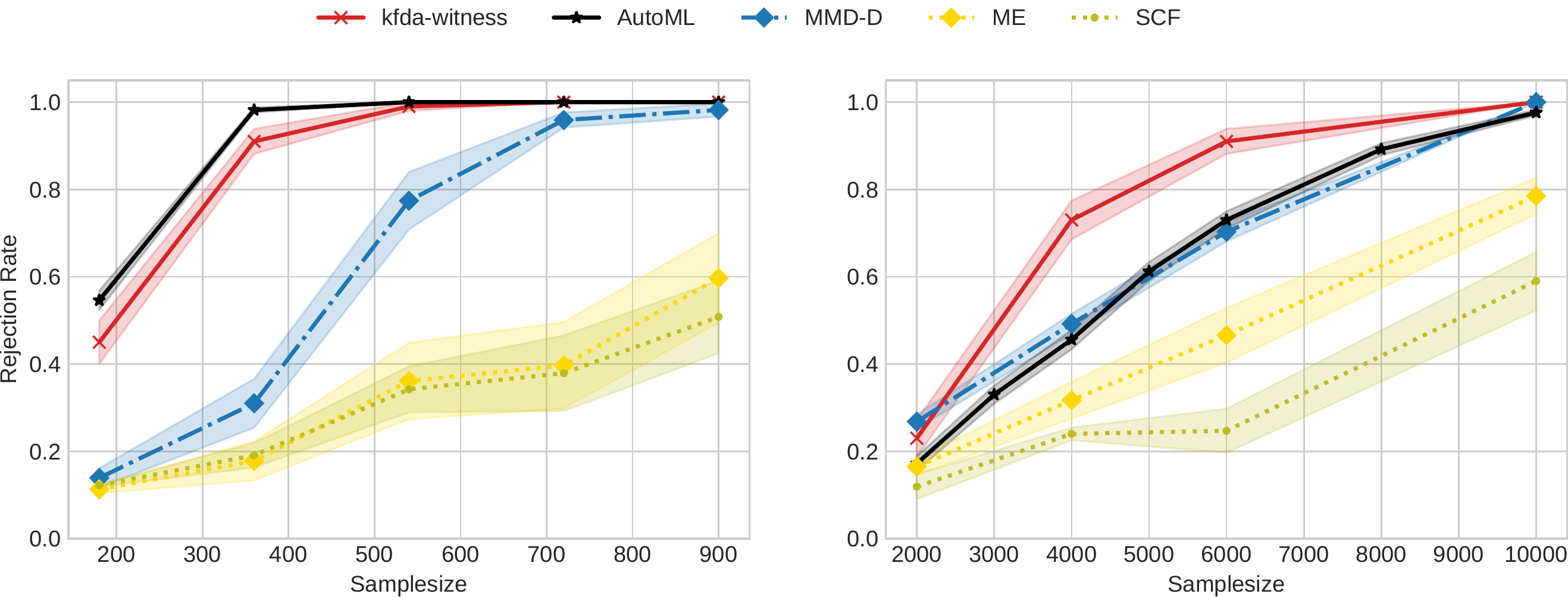}
    \caption{Experiments on low dimensional problems. The simple approach of learning a one-dimensional witness function with AutoML or optimizing a witness via kfda and a grid search can outperform more involved approaches. \textbf{Left:} Blob, \textbf{Right:} Higgs.}
    \label{fig:blobAndHiggs}
\end{figure}
We observe that both approaches based on the mean difference of a witness function ({kfda-witness}, {AutoML}) perform competitively. {AutoML} performs best on Blob, and {kfda-witness} is best on Higgs.

\paragraph{Detecting distribution shift. }
\begin{table}[t]
    \caption{Shift detection on MNIST and CIFAR10 based on \citet{Rabanser2019Failing}.}
    \label{tab:shift_results}
    \begin{subtable}[t]{0.49\linewidth}

  \caption{Test power across all simulated shifts on MNIST and CIFAR10. We propose the AutoML methods, and additionally run new baselines (MMDAgg, MMD-D).}
  \label{tab:summary_shift}
  \centering
  \setlength\tabcolsep{3pt}
  \tiny
  \begin{tabular}{cccccccccc}
    \toprule
    \multirow{2}{*}[-2px]{Test} & \multirow{2}{*}[-2px]{DR} & \multicolumn{8}{c}{Number of samples from test} \\
    \cmidrule(r){3-10}
    & & 10 & 20 & 50 & 100 & 200 & 500 & 1,000 & 10,000\\
    \midrule
    \multirow{6}{*}{\rotatebox[origin=c]{90}{Univ. tests}} & NoRed & 0.03 & 0.15 & 0.26 & 0.36 & 0.41 & 0.47 & {0.54} & {0.72} \\
    & {PCA} & 0.11 & 0.15 & 0.30 & 0.36 & 0.41 & 0.46 & {0.54} & {0.63} \\
    & SRP & 0.15 & 0.15 & 0.23 & 0.27 & 0.34 & 0.42 & {0.55} & {0.68} \\
    & UAE & 0.12 & 0.16 & 0.27 & 0.33 & 0.41 & 0.49 & {0.56} & {0.77} \\
    & TAE & 0.18 & 0.23 & 0.31 & 0.38 & 0.43 & 0.47 & {0.55} & {0.69} \\
    & BBSDs & 0.19 & {0.28} & \textbf{0.47} & \textbf{0.47} & 0.51 & \textbf{0.65} & {\textbf{\textcolor{black}{0.70}}} & 0.79 \\
    \midrule
    $\chi^2$ & BBSDh & 0.03 & 0.07 & 0.12 & 0.22 & 0.22 & {{0.40}} & {{0.46}} & {{{0.57}}} \\
    Bin & {{Classif}} & {{0.01}} & {{0.03}} & {{0.11}} & {{0.21}} & 0.28 & 0.42 & {0.51} & {0.67} \\
    \midrule
    \multirow{6}{*}{\rotatebox[origin=c]{90}{Multiv. tests}} & NoRed & 0.14 & {{0.15}} & {{0.22}} & {{0.28}} & 0.32 & {{0.44}} & {0.55} & --  \\
    & PCA & 0.15 & 0.18 & 0.33 & 0.38 & 0.40 & 0.46 & {0.55} & -- \\
    & SRP & {{0.12}} & 0.18 & 0.23 & 0.31 & {{0.31}} & {{0.44}} & {0.54} & -- \\
    & {\textcolor{black}{UAE}} & {{0.20}} & {\textcolor{black}{0.27}} & {\textcolor{black}{0.40}} & {\textcolor{black}{0.43}} & {\textcolor{black}{0.45}} & {{\textcolor{black}{0.53}}} & {{\textcolor{black}{0.61}}} & -- \\
    & TAE & 0.18 & 0.26 & 0.37 &  0.38 & 0.45 & {0.52} & {0.59} & --\\
    & BBSDs & 0.16 & 0.20 & 0.25 & 0.35 & 0.35 & 0.47 & {{0.50}} & -- \\
    \midrule    
    \multicolumn{2}{c}{AutoML (raw)} & 0.17&	0.24	&0.37	&0.46	&0.50	&0.62	&0.67	&\textbf{0.87}\\
    \multicolumn{2}{c}{AutoML (pre)} &0.18	&\textbf{0.29}	&0.42	&\textbf{0.47}	&0.47	&0.64	&0.65	&0.72\\
    \multicolumn{2}{c}{AutoML (class)}&0.19	&0.19	&0.38	&0.46	&\textbf{0.52}	&0.61	&0.67	&\textbf{0.87}\\
    \multicolumn{2}{c}{AutoML (bin)}&0.03	&0.14	&0.31	&0.43	&0.49	&0.51	&0.59	&0.86\\
    \midrule
    \multicolumn{2}{c}{MMDAgg} & 0.20 &0.28 &0.40 &0.43 &0.46 & 0.52& 0.58& 0.79\\
    \multicolumn{2}{c}{MMD-D} & \textbf{0.22}	&0.19	&0.25	&0.36	&0.40	&0.48	&0.56	&0.65 \\
    \bottomrule
  \end{tabular}
\end{subtable}
\hfill
\begin{subtable}[t]{0.49\linewidth}
   \tiny
  \caption{Test power depending on the shift for the AutoML test on the raw features (raw) vs.~the AutoML test on the output of pretrained features (pre).}
  \label{tab:shift_type}
  \centering
  \setlength\tabcolsep{2.65pt}
  \begin{tabular}{cccccccccc}
    \toprule
    \multirow{2}{*}[-2px]{Shift} & \multirow{2}{*}[-2px]{Test} & \multicolumn{8}{c}{Number of samples from test} \\
    \cmidrule(r){3-10}
    & & 10 & 20 & 50 & 100 & 200 & 500 & 1,000 & 10,000\\
    \midrule                                  
\multirow{2}{*}{s\_gn} & \textbf{raw}   %
&0.20	&0.27	&0.33	&0.40	&0.43	&0.50	&0.63	&0.80
\\
& pre & 0.00 & 0.03 & 0.10 & 0.03 & 0.00 & 0.10 & 0.03 & 0.03 \\
\midrule
\multirow{2}{*}{m\_gn} &\textbf{raw}  %
&0.27	&0.23	&0.33	&0.43	&0.43	&0.53	&0.63	&0.83
\\
& pre& 0.00 & 0.03 & 0.17 & 0.00 & 0.00 & 0.13 & 0.07 & 0.13 \\
\midrule
\multirow{2}{*}{l\_gn}& \textbf{raw} %
&0.23	&0.33	&0.53	&0.67	&0.70	&0.77	&1.00	&1.00
\\
& pre & 0.17 & 0.27 & 0.50 & 0.57 & 0.60 & 0.73 & 0.80 & 0.90\\
\midrule
\multirow{2}{*}{s\_img}& raw %
&0.13	&0.27	&0.30	&0.33	&0.40	&0.50	&0.53	&0.83
\\
& \textbf{pre} &0.20 & 0.30 & 0.60 & 0.57 & 0.67 & 0.83 & 0.83 & 1.00 \\
\midrule
\multirow{2}{*}{m\_img}& raw %
&0.03	&0.00	&0.03	&0.00	&0.10	&0.20	&0.30	&0.57
\\
& \textbf{pre} & 0.07 & 0.03 & 0.13 & 0.10 & 0.13 & 0.33 & 0.47 & 0.60\\
\midrule
\multirow{2}{*}{l\_img}& raw %
&0.20	&0.07	&0.27	&0.37	&0.40	&0.50	&0.47	&0.83
\\
& pre &0.10 & 0.03 & 0.07 & 0.23 & 0.27 & 0.57 & 0.63 & 0.70\\
\midrule
\multirow{2}{*}{adv} & raw %
&0.07	&0.10	&0.37	&0.37	&0.43	&0.70	&0.67	&0.90
\\
&\textbf{pre}&0.27 & 0.33 & 0.53 & 0.67 & 0.60 & 0.83 & 0.80 & 0.87\\
\midrule
\multirow{2}{*}{ko} &raw %
&0.17	&0.33	&0.37	&0.50	&0.60	&0.83	&0.83	&0.97
\\
& \textbf{pre} & 0.27 & 0.47 & 0.57 & 0.77 & 0.67 & 0.87 & 0.87 & 0.97 \\
\midrule
{m\_img}& raw %
&0.00	&0.03	&0.23	&0.53	&0.53	&0.67	&0.67	&1.00
\\
+ko& \textbf{pre} &0.17 & 0.43 & 0.50 & 0.73 & 0.80 & 1.00 & 1.00 & 1.00\\
\midrule
{oz}& raw %
&0.37	&0.77	&0.97	&1.00	&1.00	&1.00	&1.00	&1.00
\\
{+m\_img}& \textbf{pre} &0.60 & 0.93 & 1.00 & 1.00 & 1.00 & 1.00 & 1.00 & 1.00\\
\bottomrule
  \end{tabular}
\end{subtable}
\end{table}
\citet{Rabanser2019Failing} introduced a large benchmark for the detection of distribution shifts. 
We repeat their experiments by considering the datasets MNIST \citep{lecun2010mnist} and CIFAR10 \citep{cifar10}. 
We consider sample sizes  $n,m \in \{10,20,50,100,200,500,1000,10000\}$.
Each shift is applied on a fraction $\delta \in \{0.1, 0.5, 1.0\}$ of the second sample in different runs. We consider the following shifts: 
\textbf{Adversarial (\emph{adv})}: 
	Turn some images into adversarial examples via FGSM \citep{goodfellow2015adversarial};
	\textbf{Knock-out (\emph{ko})}: 
	Remove samples from class $0$;
	\textbf{Gaussian noise (\emph{gn})}:
	Add gaussian noise to images with standard deviation $\sigma \in \{1, 10, 100\}$
	(denoted \emph{s\_gn}, \emph{m\_gn}, and \emph{l\_gn});
	\textbf{Image (\emph{img})}: 
	Natural shifts to images through combinations 
	of random rotations, 
	$(x,y)$-axis-translation, 
	as well as zoom-in with different strength (denoted \emph{s\_img}, \emph{m\_img}, 
	and \emph{l\_img});
	\textbf{Image + knock-out
	(\emph{m\_img+ko})}: 
	Fixed medium image shift and a variable knock-out shift;
	\textbf{Only-zero + image  (\emph{oz+m\_img})}: 
    Only images from class $0$
	in combination with a variable medium image shift. 
More details are given in \citep{Rabanser2019Failing}. In total, we run $33$ different shift experiments on MNIST and CIFAR10 each and for each sample size. Every setting is repeated for 5 times.
The methods of \citet{Rabanser2019Failing} perform a dimensionality reduction by using the whole training set (50.000 images for MNIST, 40.000 images for CIFAR10). The actual tests compare examples from the validation set (10.000 images) to examples from the shifted test set (10.000 images). They also consider a C2ST trained on the raw features, i.e. without seeing the whole training set.

We add four univariate AutoML witness tests: a) \textbf{AutoML (raw)} trains a regression model on the raw data  with MSE, which is our default,  b) \textbf{AutoML (pre)} uses the same setting, but trains on the softmax output of a pretrained classifier for MNIST/CIFAR10 respectively, which is the same representation as BBSDs used, c) \textbf{AutoML (class)} trains a classifier and uses its predicted probabilities of class '1' as witness function, d) \textbf{AutoML (bin)} uses the same as c) but only considers binary outputs. 

As additional baselines we also, for the first time, run the shift detection pipeline with MMD-D \citep{liu2020learning} and MMDAgg \citep{schrab2021mmd}, where we use the settings recommended in their paper. For MMD-D we use the exact architectures and hyperparameters that \citet{liu2020learning} used for their MNIST and CIFAR-10 Tasks. 
For MMDAgg we report results with Laplacian kernels with bandwidth in $\{2^c \lambda_\text{med} \mid c\in \{10,11,\ldots, 19, 20\}$.\footnote{The  authors of 
\citet{schrab2021mmd} proposed different parameter settings and ran the benchmark with those. For their additional results please visit \href{https://github.com/antoninschrab/FL-MMDAgg}{https://github.com/antoninschrab/FL-MMDAgg}.}

Our findings are reported in \cref{tab:shift_results}. From \cref{tab:summary_shift} we see that AutoML (raw) achieves overall very competitive performance in detecting the shifts, especially for large sample sizes. 
Moreover, we see that AutoML (raw) and AutoML (class) achieve comparable performance which confirms our findings of \cref{eq:remark_cross_ent}. Thresholding the classification probabilities to binary outputs always harms the performance, see AutoML (class) vs.~AutoML (bin). 
We can also compare AutoML (bin) with 'classif', as reported by \citet{Rabanser2019Failing}. While both use binary classifiers for the testing, 'classif' used a fixed architecture across all shifts. This illustrates the power of using AutoML, as we find significantly better performance across all sample sizes.
If instead of training on the raw features we start from the ten dimensional pretrained features, i.e. AutoML (pre), the performance is improved when the sample size is small. For large sample sizes, instead working with the raw features gives higher power. 
We also see that the AutoML test outperforms MMDAgg and MMD-D except for very small sample size.

In \cref{tab:shift_type} we report the test power for comparing AutoML (raw) with AutoML (pre) for the different shifts.  
Using the pretrained probabilities of the softmax output, it is extremely hard to detect Gaussian noise, while AutoML (raw) does a fairly good job here. This is consistent with the findings of \citet[Table 1(b)]{Rabanser2019Failing}. Apparently, the output probabilities of the pretrained models are quite invariant under small and medium noise on the inputs. 
For the other shifts, such as knock-outs, using the pretrained features improves performance, particularly at small sample sizes. 
The Code to reproduce our experiments is provided at \href{https://github.com/jmkuebler/autoML-TST-paper}{github.com/jmkuebler/autoML-TST-paper}.

\section{Discussion}\label{sec:discussion}
\paragraph{Bias-variance tradeoff.} Our results on the distribution shift benchmark indicate a bias-variance tradeoff when optimizing the witness in Stage-I. Learning the witness function over a ten dimensional pretrained representation gives good test power for some shifts even for small sample sizes, however, at the cost of being almost unable to detect other shifts, such as local Gaussian noise. Thus, learning on pretrained features introduces a strong bias. On the other hand, learning directly on the raw features introduces little bias, even more so since we used AutoGluon's \texttt{TabularPredictor}, which is not specifically designed for images. This has the effect that on small sample sizes the test power is reduced, but when large data is available, we observe good test power across almost all shifts. For practical applications this implies that using models with the right bias when learning hypothesis tests is just as important as in any other supervised learning setting.
\vspace{-\parminus}
\paragraph{Stand on the shoulders of giants.} As we see from the Blob and Higgs experiments the conceptually simple witness two-sample test can outperform more sophisticated test statistics like the deep MMD. This is possible through both the use of cross-validation ({kfda-witness}) or a full AutoML pipeline. 
In the distribution shift benchmark, we saw much better performance even when comparing a binary classifier (AutoML (bin)) with a classifier having a prespecified architecture (classif).
Furthermore, using an AutoML framework allows practitioners to stand on the shoulders of giants and removes the need for specialized expertise. Instead, they can directly control how much time and resources to spend on optimizing the witness, which can lead to improved significance and/or inference time.
\vspace{-\parminus}
\paragraph{Which test to use?} Obviously, there is no general answer to this question, and we are not claiming that our AutoML two-sample test should always be used. In special settings, a simple parametric test would perform much better than our AutoML witness test (see 
\cref{app:simple}). Similarly, using MMD with a kernel can be the right choice in some settings.
Nevertheless, a few points should be considered. For example, we demonstrate that a test using binary outputs of a classifier underperforms a test using the predicted probabilities of the same classifier. Therefore, we do recommend choosing the latter instead of the former. 
Furthermore, when using data splitting we should ensure that in the first stage we are actually optimizing the test power or a directly related proxy loss. 
To this end, it is important to use techniques that ensure good predictive performance and prevent overfitting.
This brings us to the last point: We should also consider the resources available, both computational and human, that are relevant when implementing the test. That is, a testing framework should be easy to apply by a large group of users and should be adaptable to the computational resources the user is willed to spend on the test. 
The AutoML witness test can tick off all boxes. It learns a continuous witness function to optimize test power, leverages well-engineered toolboxes to maximize predictive performance, and requires little engineering expertise to apply and gives easy control over the computational resources used to learn the test, by setting a time limit and providing the available hardware.

\vspace{-\parminus}
\paragraph{Limitations and future work.}
While we recommend using a 50/50 split for learning and testing, this is generally not optimal. The splitting ratio  balances the need to train a powerful witness function, while retaining enough data to obtain significant results in the testing phase. 
A potential extension could be to adaptively choose how much data is used for training \citep{lheritier}, by estimating whether the expected improvement of the witness function outweighs the reduction of the test set.
Alternatively, we could follow the idea of a $k$-fold cross-validation. Setting $k=2$, we could estimate two witness functions and estimate the witness value on the respective held-out sample. One could then effectively use the whole dataset to compute the test statistic. However, this approach creates again a dependency and requires a new method to obtain reliable $p$-values. 

\section{Conclusion}
We showed that optimizing a squared loss or cross-entropy loss leads to a witness function that maximizes test power, when using the mean discrepancy of the witness as a test statistic.
This allows us to harness the advances in Automated Machine Learning, where regression and classification are the standard tasks, for two-sample testing. 
Although less studied, the use of a well-engineered toolbox to maximize the predictive performance of the learned function is just as important for hypothesis testing as it is for supervised learning tasks.
The result is a testing pipeline that is theoretically justified, leads to competitive performance, and is simple to apply in various settings. Our work thus constitutes a step towards fully automated statistical analysis of complex data \citep{steinruecken2019automatic}.

\paragraph{Acknowledgments and Funding Disclosure.}
We thank Lisa Koch and Wittawat Jitkrittum for helpful discussions and Vincent Berenz for contributing to \texttt{autotst}. Furthermore, Antonin Schrab and Arthur Gretton for helping resolve the memory-efficiency issue of the MMDAgg test used in the original experiments and for providing complementary results to those in our Table 1. This work was  supported by the German Federal Ministry of Education and Research (BMBF) through the Tübingen AI Center (FKZ: 01IS18039B) and the Machine Learning Cluster of Excellence number 2064/1 – Project 390727645.
\bibliography{refs}

\begin{thebibliography}{45}
\providecommand{\natexlab}[1]{#1}
\providecommand{\url}[1]{\texttt{#1}}
\expandafter\ifx\csname urlstyle\endcsname\relax
  \providecommand{\doi}[1]{doi: #1}\else
  \providecommand{\doi}{doi: \begingroup \urlstyle{rm}\Url}\fi

\bibitem[Baldi et~al.(2014)Baldi, Sadowski, and Whiteson]{baldi2014searching}
P.~Baldi, P.~Sadowski, and D.~Whiteson.
\newblock Searching for exotic particles in high-energy physics with deep
  learning.
\newblock \emph{Nature communications}, 5\penalty0 (1):\penalty0 1--9, 2014.

\bibitem[Cai et~al.(2020)Cai, Goggin, and Jiang]{CaiGogJia2020}
H.~Cai, B.~Goggin, and Q.~Jiang.
\newblock Two-sample test based on classification probability.
\newblock \emph{Statistical Analysis and Data Mining: The ASA Data Science
  Journal}, 13\penalty0 (1):\penalty0 5--13, 2020.

\bibitem[Cheng and Cloninger(2019)]{CheClo2019}
X.~Cheng and A.~Cloninger.
\newblock Classification logit two-sample testing by neural networks.
\newblock \emph{arXiv:1909.11298}, 2019.

\bibitem[Chwialkowski et~al.(2015)Chwialkowski, Ramdas, Sejdinovic, and
  Gretton]{ChwialkowskiRSG15}
K.~Chwialkowski, A.~Ramdas, D.~Sejdinovic, and A.~Gretton.
\newblock Fast two-sample testing with analytic representations of probability
  measures.
\newblock In \emph{NeurIPS}, 2015.

\bibitem[Dietterich(2000)]{dietterich2000ensemble}
T.~G. Dietterich.
\newblock Ensemble methods in machine learning.
\newblock In \emph{International workshop on multiple classifier systems},
  pages 1--15. Springer, 2000.

\bibitem[Duda et~al.(2001)Duda, Hart, and Stork]{DudaHS2001}
R.~O. Duda, P.~E. Hart, and D.~G. Stork.
\newblock \emph{Pattern classification, 2nd Edition}.
\newblock Wiley, 2001.

\bibitem[Erickson et~al.(2020)Erickson, Mueller, Shirkov, Zhang, Larroy, Li,
  and Smola]{autogluon2020}
N.~Erickson, J.~Mueller, A.~Shirkov, H.~Zhang, P.~Larroy, M.~Li, and A.~Smola.
\newblock Autogluon-tabular: Robust and accurate automl for structured data.
\newblock \emph{arXiv:2003.06505}, 2020.

\bibitem[Erven et~al.(2012)Erven, Gr{\"u}nwald, and
  De~Rooij]{erven2012catching}
T.~v. Erven, P.~Gr{\"u}nwald, and S.~De~Rooij.
\newblock Catching up faster by switching sooner: A predictive approach to
  adaptive estimation with an application to the aic--bic dilemma.
\newblock \emph{Journal of the Royal Statistical Society: Series B (Statistical
  Methodology)}, 74\penalty0 (3):\penalty0 361--417, 2012.

\bibitem[Feurer et~al.(2015)Feurer, Klein, Eggensperger, Blum, and
  Hutter]{feurer-neurips15a}
M.~Feurer, A.~Klein, J.~Eggensperger, Katharina~Springenberg, M.~Blum, and
  F.~Hutter.
\newblock Efficient and robust automated machine learning.
\newblock In \emph{NeurIPS}, 2015.

\bibitem[Feurer et~al.(2020)Feurer, Eggensperger, Falkner, Lindauer, and
  Hutter]{feurer2020auto}
M.~Feurer, K.~Eggensperger, S.~Falkner, M.~Lindauer, and F.~Hutter.
\newblock Auto-sklearn 2.0: Hands-free automl via meta-learning.
\newblock \emph{arXiv:2007.04074}, 2020.

\bibitem[Friedman(2003)]{Friedman03:Tests}
J.~H. Friedman.
\newblock {On multivariate goodness of fit and two sample testing}.
\newblock \emph{Stanford Linear Accelerator Center--PUB--10325}, 2003.

\bibitem[Gijsbers et~al.(2022)Gijsbers, Bueno, Coors, LeDell, Poirier, Thomas,
  Bischl, and Vanschoren]{gijsbers2022amlb}
P.~Gijsbers, M.~L. Bueno, S.~Coors, E.~LeDell, S.~Poirier, J.~Thomas,
  B.~Bischl, and J.~Vanschoren.
\newblock Amlb: an automl benchmark.
\newblock \emph{arXiv:2207.12560}, 2022.

\bibitem[Golland and Fischl(2003)]{Golland2003PermutationClassifier}
P.~Golland and B.~Fischl.
\newblock Permutation tests for classification: Towards statistical
  significance in image-based studies.
\newblock In C.~Taylor and J.~A. Noble, editors, \emph{Information Processing
  in Medical Imaging}, pages 330--341, Berlin, Heidelberg, 2003. Springer
  Berlin Heidelberg.
\newblock ISBN 978-3-540-45087-0.

\bibitem[Goodfellow et~al.(2015)Goodfellow, Shlens, and
  Szegedy]{goodfellow2015adversarial}
I.~Goodfellow, J.~Shlens, and C.~Szegedy.
\newblock Explaining and harnessing adversarial examples.
\newblock In \emph{ICLR}, 2015.

\bibitem[Gretton et~al.(2005)Gretton, Bousquet, Smola, and
  Sch{\"o}lkopf]{gretton2005measuring}
A.~Gretton, O.~Bousquet, A.~Smola, and B.~Sch{\"o}lkopf.
\newblock Measuring statistical dependence with hilbert-schmidt norms.
\newblock In \emph{International conference on algorithmic learning theory},
  pages 63--77. Springer, 2005.

\bibitem[Gretton et~al.(2012{\natexlab{a}})Gretton, Borgwardt, Rasch,
  Sch{\"o}lkopf, and Smola]{gretton2012kernel}
A.~Gretton, K.~M. Borgwardt, M.~J. Rasch, B.~Sch{\"o}lkopf, and A.~Smola.
\newblock A kernel two-sample test.
\newblock \emph{Journal of Machine Learning Research}, 13:\penalty0 723--773,
  2012{\natexlab{a}}.

\bibitem[Gretton et~al.(2012{\natexlab{b}})Gretton, Sejdinovic, Strathmann,
  Balakrishnan, Pontil, Fukumizu, and Sriperumbudur]{Gretton2012optimal}
A.~Gretton, D.~Sejdinovic, H.~Strathmann, S.~Balakrishnan, M.~Pontil,
  K.~Fukumizu, and B.~K. Sriperumbudur.
\newblock Optimal kernel choice for large-scale two-sample tests.
\newblock In \emph{NeurIPS}, 2012{\natexlab{b}}.

\bibitem[He et~al.(2021)He, Zhao, and Chu]{he2021automl}
X.~He, K.~Zhao, and X.~Chu.
\newblock Automl: A survey of the state-of-the-art.
\newblock \emph{Knowledge-Based Systems}, 212:\penalty0 106622, 2021.

\bibitem[Hediger et~al.(2022)Hediger, Michel, and Näf]{HEDIGER2022}
S.~Hediger, L.~Michel, and J.~Näf.
\newblock On the use of random forest for two-sample testing.
\newblock \emph{Computational Statistics and Data Analysis}, 170:\penalty0
  107435, 2022.

\bibitem[Hutter et~al.(2019)Hutter, Kotthoff, and
  Vanschoren]{hutter2019automated}
F.~Hutter, L.~Kotthoff, and J.~Vanschoren.
\newblock \emph{Automated machine learning: methods, systems, challenges}.
\newblock Springer Nature, 2019.

\bibitem[Jitkrittum et~al.(2016)Jitkrittum, Szab\'{o}, Chwialkowski, and
  Gretton]{JitSzaChwGre2016}
W.~Jitkrittum, Z.~Szab\'{o}, K.~P. Chwialkowski, and A.~Gretton.
\newblock Interpretable distribution features with maximum testing power.
\newblock In \emph{NeurIPS}, 2016.

\bibitem[Kim et~al.(2021)Kim, Ramdas, Singh, and
  Wasserman]{kim2016classification}
I.~Kim, A.~Ramdas, A.~Singh, and L.~Wasserman.
\newblock {Classification accuracy as a proxy for two-sample testing}.
\newblock \emph{The Annals of Statistics}, 49\penalty0 (1):\penalty0 411 --
  434, 2021.

\bibitem[Kirchler et~al.(2020)Kirchler, Khorasani, Kloft, and
  Lippert]{KirchlerKKL20}
M.~Kirchler, S.~Khorasani, M.~Kloft, and C.~Lippert.
\newblock Two-sample testing using deep learning.
\newblock In \emph{AISTATS}, 2020.

\bibitem[Koch et~al.(2022)Koch, Sch\"ubach, Gretton, and
  Berens]{koch2022Hidden}
L.~Koch, C.~Sch\"ubach, A.~Gretton, and P.~Berens.
\newblock Hidden in plain sight: Subgroup shifts escape ood detection.
\newblock In \emph{Medical Imaging in Deep Learning}, 2022.

\bibitem[Krizhevsky(2009)]{cifar10}
A.~Krizhevsky.
\newblock Learning multiple layers of features from tiny images, 2009.

\bibitem[Kübler et~al.(2020)Kübler, Jitkrittum, Schölkopf, and
  Muandet]{kbler2020learning}
J.~M. Kübler, W.~Jitkrittum, B.~Schölkopf, and K.~Muandet.
\newblock Learning kernel tests without data splitting.
\newblock In \emph{NeurIPS}, 2020.

\bibitem[Kübler et~al.(2022)Kübler, Jitkrittum, Schölkopf, and
  Muandet]{kubler2022witness}
J.~M. Kübler, W.~Jitkrittum, B.~Schölkopf, and K.~Muandet.
\newblock A witness two-sample test.
\newblock In \emph{AISTATS}, 2022.

\bibitem[LeCun et~al.(2010)LeCun, Cortes, and Burges]{lecun2010mnist}
Y.~LeCun, C.~Cortes, and C.~Burges.
\newblock Mnist handwritten digit database.
\newblock \emph{ATT Labs [Online]. Available:
  http://yann.lecun.com/exdb/mnist}, 2, 2010.

\bibitem[Lhéritier and Cazals(2018)]{lheritier}
A.~Lhéritier and F.~Cazals.
\newblock A sequential non-parametric multivariate two-sample test.
\newblock \emph{IEEE Transactions on Information Theory}, 64\penalty0 (5),
  2018.

\bibitem[Lipton et~al.(2018)Lipton, Wang, and Smola]{lipton18aBlackBox}
Z.~Lipton, Y.-X. Wang, and A.~Smola.
\newblock Detecting and correcting for label shift with black box predictors.
\newblock In \emph{ICML}, 2018.

\bibitem[Liu et~al.(2020)Liu, Xu, Lu, Zhang, Gretton, and
  Sutherland]{liu2020learning}
F.~Liu, W.~Xu, J.~Lu, G.~Zhang, A.~Gretton, and D.~J. Sutherland.
\newblock Learning deep kernels for non-parametric two-sample tests.
\newblock In \emph{ICML}, 2020.

\bibitem[Liu et~al.(2021)Liu, Xu, Lu, and Sutherland]{Liu2021Meta}
F.~Liu, W.~Xu, J.~Lu, and D.~J. Sutherland.
\newblock Meta two-sample testing: Learning kernels for testing with limited
  data.
\newblock In \emph{NeurIPS}, 2021.

\bibitem[Lopez{-}Paz and Oquab(2017)]{LopOqu2017}
D.~Lopez{-}Paz and M.~Oquab.
\newblock Revisiting classifier two-sample tests.
\newblock In \emph{ICLR}, 2017.

\bibitem[Mao et~al.(2019)Mao, Li, Xie, Lau, Wang, and Smolley]{lsGAN}
X.~Mao, Q.~Li, H.~Xie, R.~K. Lau, Z.~Wang, and S.~Smolley.
\newblock On the effectiveness of least squares generative adversarial
  networks.
\newblock \emph{IEEE Transactions on Pattern Analysis and Machine
  Intelligence}, 41\penalty0 (12):\penalty0 2947--2960, dec 2019.

\bibitem[Mika(2003)]{Mika_PhD2003}
S.~Mika.
\newblock \emph{Kernel Fisher Discriminants}.
\newblock Doctoral thesis, Technische Universität Berlin, Berlin, 2003.

\bibitem[Phipson and Smyth(2010)]{phipson2010permutation}
B.~Phipson and G.~K. Smyth.
\newblock Permutation p-values should never be zero: calculating exact p-values
  when permutations are randomly drawn.
\newblock \emph{Statistical applications in genetics and molecular biology},
  9\penalty0 (1), 2010.

\bibitem[Rabanser et~al.(2019)Rabanser, G\"{u}nnemann, and
  Lipton]{Rabanser2019Failing}
S.~Rabanser, S.~G\"{u}nnemann, and Z.~Lipton.
\newblock Failing loudly: An empirical study of methods for detecting dataset
  shift.
\newblock In \emph{NeurIPS}, 2019.

\bibitem[Schrab et~al.(2021)Schrab, Kim, Albert, Laurent, Guedj, and
  Gretton]{schrab2021mmd}
A.~Schrab, I.~Kim, M.~Albert, B.~Laurent, B.~Guedj, and A.~Gretton.
\newblock Mmd aggregated two-sample test.
\newblock \emph{arXiv:2110.15073}, 2021.

\bibitem[Shekhar et~al.(2022)Shekhar, Kim, and Ramdas]{shekhar2022permutation}
S.~Shekhar, I.~Kim, and A.~Ramdas.
\newblock A permutation-free kernel two-sample test.
\newblock In \emph{NeurIPS}, 2022.

\bibitem[Steinruecken et~al.(2019)Steinruecken, Smith, Janz, Lloyd, and
  Ghahramani]{steinruecken2019automatic}
C.~Steinruecken, E.~Smith, D.~Janz, J.~Lloyd, and Z.~Ghahramani.
\newblock The automatic statistician.
\newblock In \emph{Automated Machine Learning}, pages 161--173. Springer, Cham,
  2019.

\bibitem[Student(1908)]{student1908probable}
Student.
\newblock The probable error of a mean.
\newblock \emph{Biometrika}, pages 1--25, 1908.

\bibitem[Sutherland et~al.(2017)Sutherland, Tung, Strathmann, De, Ramdas,
  Smola, and Gretton]{sutherland2016generative}
D.~J. Sutherland, H.-Y. Tung, H.~Strathmann, S.~De, A.~Ramdas, A.~Smola, and
  A.~Gretton.
\newblock Generative models and model criticism via optimized maximum mean
  discrepancy.
\newblock In \emph{ICLR}, 2017.

\bibitem[Vayatis et~al.(2009)Vayatis, Depecker, and
  Cl{\'e}men{\c{c}}on]{vayatis2009auc}
N.~Vayatis, M.~Depecker, and S.~Cl{\'e}men{\c{c}}on.
\newblock Auc optimization and the two-sample problem.
\newblock In \emph{NeurIPS}, 2009.

\bibitem[Welch(1947)]{welch1947generalization}
B.~L. Welch.
\newblock The generalization of ‘student's’problem when several different
  population varlances are involved.
\newblock \emph{Biometrika}, 34\penalty0 (1-2):\penalty0 28--35, 1947.

\bibitem[Zhao et~al.(2022)Zhao, Sinha, He, Perreault, Song, and
  Ermon]{zhao2021comparing}
S.~Zhao, A.~Sinha, Y.~He, A.~Perreault, J.~Song, and S.~Ermon.
\newblock Comparing distributions by measuring differences that affect decision
  making.
\newblock In \emph{ICLR}, 2022.

\end{thebibliography}
\bibliographystyle{abbrvnat}

\newpage
\onecolumn
\appendix
\section{Equivalence of squared loss and signal-to-noise ratio}\label{app:equivalence}

We now prove Lemma~\ref{lem:equivalence}. 
While the simplicity of the relation suggests that there is an instructive proof we here give a proof based on direct calculation.
\begin{proof}[Proof of Lemma \ref{lem:equivalence}]
After renaming we can assume that the minimizer of
$(\gamma, \nu)\to L(\gamma h+\nu)$ is $(\gamma^\ast, \nu^\ast)=(1,0)$, i.e., $h$ itself minimizes the loss.
We use the shorthand
\begin{align}
   \bar{h}_P\equiv\expect{P}{h(X)}\quad\text{and}\quad
   \bar{h}_Q\equiv\expect{Q}{h(Y)}
\end{align}
for the mean of $h$ under $P$ and $Q$.
Note that 
\begin{align}
 0=  \left. \frac{\d}{\d \nu}\right|_{\nu=0} L(h+\nu)=
   2(1-c)\expect{P}{h(X) - 1}+2c\expect{Q}{h(X)}.
\end{align}
This implies
\begin{align}\label{eq:rel_nu}
    c\bar{h}_Q =(1-c)(1-\bar{h}_P).
\end{align}
Similarly we get
\begin{align}
   0=  \left. \frac{\d}{\d \gamma}\right|_{\gamma=1} L(\gamma h)
   =2 (1-c)\expect{P}{h(X)(h(X)-1)}
   +2c \expect{Q}{h(Y)^2}.
\end{align}
We conclude that
\begin{align}\label{eq:rel_gamma}
    (1-c)\expect{P}{h(X)^2}
   +c \expect{Q}{h(Y)^2}=(1-c)\bar{h}_P.
\end{align}
We observe using \eqref{eq:rel_gamma} and \eqref{eq:rel_nu} that
\begin{align}\label{eq:L}
\begin{split}
    L(h)&= (1-c)\left(\expect{P}{h(X)^2} - 2\expect{P}{h(X)}
    +1\right)
   +c \expect{Q}{h(Y)^2}
   \\
   &=(1-c)+(1-c)\bar{h}_P-2(1-c)\bar{h}_P
   \\
   &= (1-c)(1-\bar{h}_P)
   =c\bar{h}_Q.
   \end{split}
\end{align}
Recall that 
\begin{align}
    \sigma_c^2(h) = \frac{(1-c)\text{Var}_{X\sim P}\left[h(X)\right] + c\text{Var}_{Y\sim Q}\left[h(Y)\right]}{c(1-c)}.
\end{align}
Using $\mathrm{Var}_P(h(X))=\expect{P}{h(X)^2}-\bar{h}_P^2$
and \eqref{eq:rel_gamma} we derive
\begin{align}
\begin{split}\label{eq:num}
    c(1-c)\sigma_c^2(h)
    &=
     (1-c)\expect{P}{h(X)^2}
   +c \expect{Q}{h(Y)^2}
   -(1-c)\bar{h}_P^2
    -c\bar{h}_Q^2
    \\
    &=(1-c)\bar{h}_P-(1-c)\bar{h}_P^2
    -c\bar{h}_Q^2
    \\
    &=(1-c)\bar{h}_P(1-\bar{h}_P)-c\bar{h}_Q^2
    \\
    &= c\bar{h}_P \bar{h}_Q - c\bar{h}_Q^2
    \\
    &=
    L(h)(\bar{h}_P-\bar{h}_Q)
    \end{split}
\end{align}
where we used \eqref{eq:rel_nu} in the penultimate and \eqref{eq:L} in the last step.
Using the second step from the last display 
we obtain
\begin{align}
\begin{split}
  & c(1-c)\left(\sigma_c^2(h)  
  +(\bar{h}_P-\bar{h}_Q)^2\right) \\
  &=
  \left((1-c)\bar{h}_P-(1-c)\bar{h}_P^2
    -c\bar{h}_Q^2\right)
    +c(1-c)(\bar{h}_P^2
    +\bar{h}_Q^2-2\bar{h}_P\bar{h}_Q)
  \\
  &=
   (1-c)\bar{h}_P-(1-c)^2\bar{h}_P^2-c^2\bar{h}_Q^2
   -2c(1-c)\bar{h}_P\bar{h}_Q
   \\
   &= 
   (1-c)\bar{h}_P- ((1-c)\bar{h}_P+c\bar{h}_Q)^2.
  \end{split}
\end{align}
   Now we use  \eqref{eq:rel_nu}
   which implies $1-c=(1-c)\bar{h}_P+c\bar{h}_Q$ and 
    get 
   \begin{align}
       \begin{split}\label{eq:denom}
            c(1-c)\left(\sigma_c^2(h)  
  +(\bar{h}_P-\bar{h}_Q)^2\right)  &= (1-c)\bar{h}_P-(1-c)((1-c)\bar{h}_P+c\bar{h}_Q)
   \\
   &=(1-c)(c\bar{h}_P-c\bar{h}_Q).
       \end{split}
   \end{align}
 Recall that $\snr^2 = \sigma_c(h)^{-2}(\bar{h}_P-\bar{h}_Q)^2$.
 We thus get using \eqref{eq:num} and \eqref{eq:denom},
 \begin{align}
     \frac{1}{1+\snr^2}
     =
     \frac{\sigma_c(h)^2}{\sigma_c(h)^2+(\bar{h}_P-\bar{h}_Q)^2}
     =
     \frac{L(h)(\bar{h}_P-\bar{h}_Q)}{(1-c)c(\bar{h}_P-\bar{h}_Q)}
     =
     \frac{L(h)}{c(1-c)}.
 \end{align}
This completes the proof.
\end{proof}
\section{Extended discussion of related works}
\subsection{Implications for testing with MMD with an optimized kernel}\label{app:implicationsMMD}
As we discussed in the related work, using the mean discrepancy as a test statistic is closely connected to tests based on the MMD \citep{gretton2012kernel}. We now briefly discuss the implications of our findings in \cref{sec:equivalence} for MMD-based tests with optimized kernel functions \citep{sutherland2016generative, liu2020learning}. 

\citet{sutherland2016generative} showed that the asymptotic test power of an MMD-based two sample test is determined by its kernel function $k$ via the criterion $
    J(P, Q; k) = \text{MMD}^2(P,Q;k) / \sigma(P,Q;k)$, where $\sigma(P,Q;k)$ is the standard deviation of the MMD estimator, see Proposition 2 and Eq.~(3) of \citet{liu2020learning}.
    Hence, they use an empirical estimate of $J$ when optimizing the kernel function.
    \citet[Appendix A.5]{kubler2022witness} showed that $J$ is directly related to the SNR \cref{eq:snr} of the MMD-witness function:
    \begin{align}\label{eq:relationMMDtoWitness}
        J(P,Q;k) =\frac{1}{\sqrt{2}}\text{SNR}(h_k^{P,Q}),
        \end{align}
        where $h_k^{P,Q}= \mu_P - \mu_Q$ is the MMD-witness\footnote{Note that the MMD-witness is not defined to maximize test power.} of kernel $k$, and $\mu_P$, $\mu_Q$ denote the kernel mean embeddings.
        Hence, we can think of optimizing the kernel for an MMD two-sample test as trying to optimize the kernel such that its MMD-witness has maximal testing power in a witness two-sample test. 
        Given this insight, \citet{kubler2022witness} argue that maximizing a witness is a more direct approach as opposed to optimizing a kernel and then using MMD.
        When committing to MMD nevertheless, our insights of \cref{sec:equivalence} are directly applicable when optimizing the asymptotic test power of MMD-based tests:
        \begin{enumerate}[leftmargin=2em]
            \item Instead of optimizing $J$ one can also optimize the kernel function by minimizing the squared loss or cross-entropy loss of its associated MMD-witness function (Proposition \ref{prop:minimizer} and \cref{eq:remark_cross_ent}). We are not aware of any work that considered these choices before, see also \citet[Section 2.2]{sutherland2016generative} for an overview of previously used (heuristic) approaches.
            \item An asymptotically optimal kernel function is $k^*(x,x') = h^*(x)h^*(x')$, with $h^*$ given in \cref{eq:optimal_witness}.
        \end{enumerate}
        To see the second point, note that for $k^*(x,x') = h^*(x)h^*(x')$ the corresponding MMD-witness is
        \begin{align}
        \begin{split}
             h_{k^*}^{P,Q}(x') &=h^*(x') \left(\expect{X\sim P}{h^*(X)} - \expect{Y\sim Q}{h^*(Y)}\right) \\&\propto h^*(x').
             \end{split}
        \end{align}
             Since $h^*$ is the optimal witness and the SNR is invariant to scaling, $h_{k^*}^{P,Q}$ maximizes the right side of \cref{eq:relationMMDtoWitness}, and thus no kernel function can lead to a larger $J$ criterion.

\subsection{Consistency of the AutoML two-sample test}\label{app:consistency}             
As we mention in the main text, a witness-based two-sample test is consistent if in Stage-I with high probability one finds a witness function that can discriminate the two distributions with a SNR that is bounded from below. \citet{kubler2022witness} showed that this is the case when learning the witness function via (regularized) kernel Fisher Discriminant Analysis. Ideally, the asymptotic witness function should also correspond to the Bayes optimal classifier (Proposition \ref{prop:optimal_witness}). For AutoML frameworks such guarantees are usually not given. However, we emphasize that in practice what matters is the predictive performance of the witness function learned from finite data. With regard to this, AutoML has proven to be easier to apply and more successful than methods based on more theoretically grounded learning algorithms. Nevertheless, if one cares more about the theoretical guarantees than the practical performance, one might resort to using a nonparametric method. A compromise could be to switch the used model similar to \citet{erven2012catching, lheritier}.

\section{Further experiments and details}\label{app:further_experiments}
\subsection{Type-I error control}\label{app:Type-I}
In \cref{sec:implementation} we discussed two methods to obtain $p$-values. Based on the asymptotic distribution or based on permutations of the witness values. Since using permutations does not lead to a critical increase in computational resources, we recommend this approach by default since it controls Type-I error also at finite sample size.
We empirically show this by running two experiments with the Blob and Higgs dataset with significance level $\alpha=5\%$ and maximal training time $t_\text{max}=1\text{min}$. We follow \citet{liu2020learning} and sample $\xfull$ and $\yfull$ from the same distributions. For each sample size we estimate the Type-I error rate over 500 independent runs and report the results in \cref{fig:Type-I_error}. Overall, on Blob we estimate a Type-I error of $4.8\% \pm 0.4\%$ and Higgs of $4.3\% \pm 0.3\%$, demonstrating that our test correctly controls Type-I error.
\begin{figure}
    \centering
    \includegraphics[width=\textwidth]{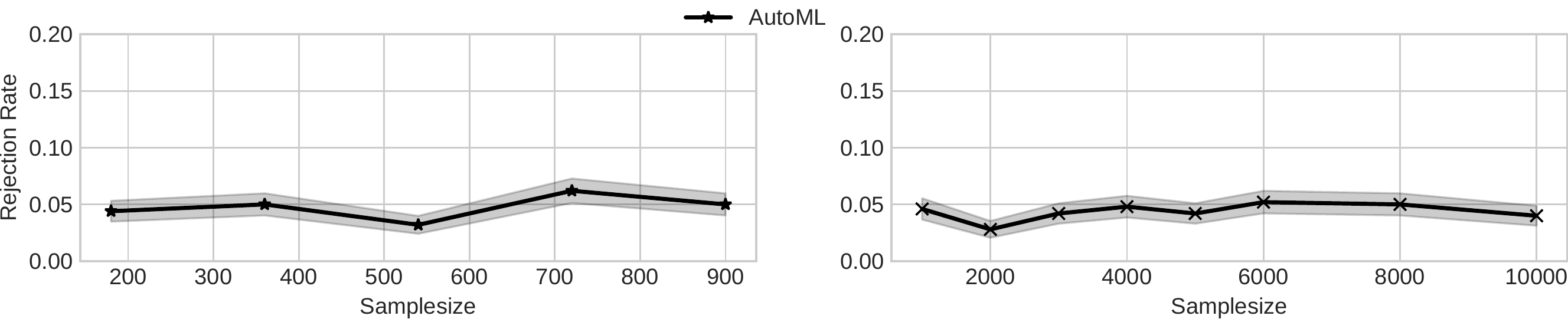}
    \caption{Type-I error rates with specified level $\alpha = 0.05$. \textbf{Left:} Blob \textbf{Right:} Higgs.}
    \label{fig:Type-I_error}
\end{figure}

\subsection{Further experiments}
In the main paper, the default setting of our reported results was to use AutoGluon with \texttt{presets='best\_quality'} and training with the MSE. We set the maximal runtime to $t_\text{max}=5$ minutes. We now report further experiments with different settings and a more fine-grained analysis for the shift detection datasets.
\paragraph{Blob dataset.} We run different variants of the AutoML two-sample test on the Blob dataset. We use different maximal training times $t_\text{max}$ and besides our default approach 'AutoML' that uses the MSE, we also consider training a classifier with AutoGluon and using its probability of class '1' as witness 'AutoML (class)'. We also consider the binary outputs of the classifier as witness 'AutoML (bin)'.

We report the test power averaged over 500 trials in \cref{tab:blob}.
Consistently with \cref{eq:remark_cross_ent} and our observations, using 'AutoML (class)' performs comparably to training with the MSE. However, thresholding the classifier to binary values drastically decreases performance. 
We do not observe any significant effect of allowing longer training times on this simple dataset.

All experiments were run on servers with Intel Xeon Platinum 8360Y processors, having 18 cores and 64 GB of memory each.

\begin{table}[!ht]
    \centering
    \caption{Test power on Blob dataset.
    }
    \label{tab:blob}
    \tiny
    \begin{tabular}{c c c c c c c}
    \toprule
    \multirow{2}{*}{$t_\text{max}$}&  \multirow{2}{*}{Test} & \multicolumn{5}{c}{Sample Size} \\
    \cmidrule(r){3-7}
         &  & 180 & 360 & 540 & 720 & 900 \\ \midrule
         \multirow{3}{*}{1}  & AutoML & \textbf{0.56$\pm$0.02} & \textbf{0.98$\pm$0.01} & \textbf{1.00$\pm$0.00}& \textbf{1.00$\pm$0.00}& \textbf{1.00$\pm$0.00}\\ 
       & AutoML (class) & 0.54$\pm$0.02 & 0.95$\pm$0.01 & \textbf{1.00$\pm$0.00}& \textbf{1.00$\pm$0.00}& \textbf{1.00$\pm$0.00}\\ 
         & AutoML (bin) & {\color{red}0.39$\pm$0.02} & {\color{red}0.84$\pm$0.02} & {\color{red}0.99$\pm$0.00}& \textbf{1.00$\pm$0.00}& \textbf{1.00$\pm$0.00}\\ 
         \midrule
         \multirow{3}{*}{5} & AutoML & 0.55$\pm$0.02 & \textbf{0.98$\pm$0.01} & \textbf{1.00$\pm$0.00}& \textbf{1.00$\pm$0.00}& \textbf{1.00$\pm$0.00}\\ 
         & AutoML (class) & 0.54$\pm$0.02 & 0.96$\pm$0.01 & \textbf{1.00$\pm$0.00}& \textbf{1.00$\pm$0.00}& \textbf{1.00$\pm$0.00}\\ 
         & AutoML (bin) & {\color{red}0.37$\pm$0.02} & {\color{red}0.83$\pm$0.02} & {\color{red}0.98$\pm$0.01} & \textbf{1.00$\pm$0.00}& \textbf{1.00$\pm$0.00}\\ 
         \midrule
        \multirow{3}{*}{10} & AutoML & \textbf{0.56$\pm$0.02} & \textbf{0.98$\pm$0.01} & \textbf{1.00$\pm$0.00}& \textbf{1.00$\pm$0.00}& \textbf{1.00$\pm$0.00}\\ 
         & AutoML (class) & 0.53$\pm$0.02 & 0.97$\pm$0.01 & \textbf{1.00$\pm$0.00}& \textbf{1.00$\pm$0.00}& \textbf{1.00$\pm$0.00}\\ 
         & AutoML (bin) & {\color{red}0.36$\pm$0.02} & {\color{red}0.84$\pm$0.02} & {\color{red}0.99$\pm$0.01} & \textbf{1.00$\pm$0.00}& \textbf{1.00$\pm$0.00}\\ 
        \bottomrule
    \end{tabular}
\end{table}

\paragraph{Higgs dataset.} We run the AutoML two-sample test (using MSE) for different maximal training times $t_\text{max} = 1, 5, 10$ minutes on the Higgs dataset. We report our findings in \cref{tab:higgs}. Notice that the Blob dataset is much simpler than Higgs, since we achieve unit test power with much smaller sample size. For Higgs, we observe that the performance indeed depends on the training time. We observe that for smaller sample size, using less training time leads to increased test power. On the other hand, for larger sample size using more time is better. Although generally AutoGluon should mitigate overfitting, it seems that for small sample sizes it overfits the validation set, within the training stage. We believe that this happens  because the signal in the Higgs dataset is extremely small, and the heuristics AutoGluon is using are not designed for this.
For larger sample size, the general recommendation of 'allowing more time leads to better results'  is recovered.

All experiments were run on the same servers as those used for the experiments on the Blob dataset. 

\begin{table}[!ht]
    \centering
    \caption{Test power on Higgs dataset. }
    \label{tab:higgs}
    \tiny
    \begin{tabular}{c c c c c c c c c c}
    \toprule
    \multirow{2}{*}{$t_\text{max}$}&  \multirow{2}{*}{Test} & \multicolumn{8}{c}{Sample Size} \\
    \cmidrule(r){3-10}
         &  & 1000 & 2000 & 3000 & 4000 & 5000 & 6000 & 8000 & 10000 \\ \midrule
        1 & AutoML& \textbf{0.13$\pm$0.02} & \textbf{0.2$\pm$0.02} & \textbf{0.33$\pm$0.02} & \textbf{0.48$\pm$0.02} & 0.59$\pm$0.02 & 0.72$\pm$0.02 & 0.84$\pm$0.02 & 0.94$\pm$0.01 \\ 
        5 & AutoML &0.09$\pm$0.01 & 0.17$\pm$0.02 & \textbf{0.33$\pm$0.02} & 0.46$\pm$0.02 & 0.62$\pm$0.02 & 0.73$\pm$0.02 & 0.89$\pm$0.01 & 0.98$\pm$0.01 \\ 
        10 & AutoML &0.09$\pm$0.01 & 0.17$\pm$0.02 & 0.25$\pm$0.02 & 0.40$\pm$0.02 & \textbf{0.63$\pm$0.02} & \textbf{0.80$\pm$0.02} & \textbf{0.93$\pm$0.01} & \textbf{0.99$\pm$0.00} \\
        \bottomrule
    \end{tabular}
\end{table}

\newpage
\paragraph{Detecting distribution shift.}

All AutoML results reported in \cref{tab:shift_results} were run with $t_\text{max}=5$ minutes, we show detailed performance depending on the shift type, shift strength, and percentage of affected examples (shift frequency) in \cref{tab:shift_details}.  
For completeness, in \cref{tab:shift_results_overview_app} we also show summary results for AutoML (raw), i.e., using MSE on the raw features for $1$ and $10$ minute maximal runtime.

All experiments were run on servers with Intel Xeon Gold 6148 processors, having 20 cores and 48 GB of memory each.

\begin{table}[!ht]
\caption{Test power for the AutoML test with different methods all run with maximal training time of $t_\text{max}=5$ minutes. }
\label{tab:shift_details}
\begin{subtable}[t]{0.5\textwidth}
      \tiny
  \caption{Test power depending on shift type.}
  \label{tab:shift_type_app}
  \centering
  \setlength\tabcolsep{2.65pt}
  \begin{tabular}{cccccccccc}
    \toprule
    \multirow{2}{*}[-2px]{Shift} & \multirow{2}{*}[-2px]{Test} & \multicolumn{8}{c}{Number of samples from test} \\
    \cmidrule(r){3-10}
    & & 10 & 20 & 50 & 100 & 200 & 500 & 1,000 & 10,000\\
    \midrule                                  
\multirow{4}{*}{s\_gn} & raw 5   %
&0.20	&0.27	&0.33	&0.40	&0.43	&0.50	&0.63	&0.80
\\
& pre 5 & 0.00 & 0.03 & 0.10 & 0.03 & 0.00 & 0.10 & 0.03 & 0.03 \\
& class 5& 0.20 & 0.17 & 0.30 & 0.37 & 0.47 & 0.50 & 0.53 & 0.80 \\
& bin 5 & 0.00 & 0.17 & 0.27 & 0.40 & 0.40 & 0.33 & 0.40 & 0.73\\
\midrule
\multirow{4}{*}{m\_gn} &raw 5  %
&0.27	&0.23	&0.33	&0.43	&0.43	&0.53	&0.63	&0.83
\\
& pre 5 & 0.00 & 0.03 & 0.17 & 0.00 & 0.00 & 0.13 & 0.07 & 0.13 \\
& class 5 & 0.20 & 0.20 & 0.33 & 0.40 & 0.43 & 0.53 & 0.73 & 0.83\\
& bin 5 & 0.00 & 0.17 & 0.30 & 0.40 & 0.43 & 0.37 & 0.53 & 0.83\\
\midrule
\multirow{4}{*}{l\_gn}& raw 5 %
&0.23	&0.33	&0.53	&0.67	&0.70	&0.77	&1.00	&1.00
\\
& pre 5 & 0.17 & 0.27 & 0.50 & 0.57 & 0.60 & 0.73 & 0.80 & 0.90\\
& class 5& 0.33 & 0.23 & 0.57 & 0.70 & 0.73 & 0.83 & 0.93 & 1.00\\
& bin 5 & 0.03 & 0.17 & 0.43 & 0.67 & 0.70 & 0.67 & 0.80 & 1.00\\
\midrule
\multirow{4}{*}{s\_img}& raw 5%
&0.13	&0.27	&0.30	&0.33	&0.40	&0.50	&0.53	&0.83
\\
& pre 5 &0.20 & 0.30 & 0.60 & 0.57 & 0.67 & 0.83 & 0.83 & 1.00 \\
& class 5 & 0.23 & 0.10 & 0.30 & 0.37 & 0.43 & 0.50 & 0.50 & 0.87\\
& bin 5 & 0.10 & 0.17 & 0.30 & 0.33 & 0.40 & 0.43 & 0.50 & 0.83\\
\midrule
\multirow{4}{*}{m\_img}& raw 5%
&0.03	&0.00	&0.03	&0.00	&0.10	&0.20	&0.30	&0.57
\\
& pre 5 & 0.07 & 0.03 & 0.13 & 0.10 & 0.13 & 0.33 & 0.47 & 0.60\\
& class 5 & 0.10 & 0.03 & 0.07 & 0.07 & 0.17 & 0.20 & 0.30 & 0.53 \\
& bin 5 & 0.00 & 0.00 & 0.07 & 0.10 & 0.10 & 0.03 & 0.20 & 0.50\\
\midrule
\multirow{4}{*}{l\_img}& raw 5 %
&0.20	&0.07	&0.27	&0.37	&0.40	&0.50	&0.47	&0.83
\\
& pre 5 &0.10 & 0.03 & 0.07 & 0.23 & 0.27 & 0.57 & 0.63 & 0.70\\
& class 5 & 0.07 & 0.07 & 0.33 & 0.33 & 0.47 & 0.43 & 0.47 & 0.83\\
& bin 5 & 0.03 & 0.00 & 0.23 & 0.27 & 0.43 & 0.37 & 0.43 & 0.83\\
\midrule
\multirow{4}{*}{adv} & raw 5%
&0.07	&0.10	&0.37	&0.37	&0.43	&0.70	&0.67	&0.90
\\
&pre 5&0.27 & 0.33 & 0.53 & 0.67 & 0.60 & 0.83 & 0.80 & 0.87\\
& class 5& 0.10 & 0.07 & 0.33 & 0.33 & 0.40 & 0.67 & 0.70 & 0.90 \\
& bin 5 & 0.00 & 0.03 & 0.20 & 0.33 & 0.37 & 0.57 & 0.63 & 0.87\\
\midrule
\multirow{4}{*}{ko} &raw 5%
&0.17	&0.33	&0.37	&0.50	&0.60	&0.83	&0.83	&0.97
\\
& pre 5 & 0.27 & 0.47 & 0.57 & 0.77 & 0.67 & 0.87 & 0.87 & 0.97 \\
 & class 5& 0.20 & 0.23 & 0.37 & 0.53 & 0.60 & 0.80 & 0.80 & 0.97\\
 & bin 5 & 0.07 & 0.13 & 0.30 & 0.43 & 0.63 & 0.73 & 0.73 & 0.97\\
\midrule
& raw 5%
&0.00	&0.03	&0.23	&0.53	&0.53	&0.67	&0.67	&1.00
\\
{m\_img}&  pre 5 &0.17 & 0.43 & 0.50 & 0.73 & 0.80 & 1.00 & 1.00 & 1.00\\
+ko& class 5 & 0.10 & 0.07 & 0.23 & 0.53 & 0.53 & 0.60 & 0.73 & 1.00\\
& bin 5 & 0.00 & 0.03 & 0.13 & 0.43 & 0.43 & 0.60 & 0.67 & 1.00\\
\midrule
& raw 5%
&0.37	&0.77	&0.97	&1.00	&1.00	&1.00	&1.00	&1.00
\\
{oz}& pre 5&0.60 & 0.93 & 1.00 & 1.00 & 1.00 & 1.00 & 1.00 & 1.00\\
{+m\_img}& class 5 & 0.33 & 0.77 & 0.97 & 1.00 & 1.00 & 1.00 & 1.00 & 1.00\\
& bin 5 & 0.07 & 0.53 & 0.87 & 0.93 & 1.00 & 1.00 & 1.00 & 1.00\\
\bottomrule
  \end{tabular}
\end{subtable}
    \begin{subtable}[t]{.47\linewidth}
      \tiny
  \caption{Test power depending on shift intensity.}
  \label{tab:shift_int}
  \centering
  \setlength\tabcolsep{3pt}
  \begin{tabular}{cccccccccc}
    \toprule
    \multirow{2}{*}[-2px]{Test} & \multirow{2}{*}[-2px]{Intensity} & \multicolumn{8}{c}{Number of samples from test} \\
    \cmidrule(r){3-10}
    & & 10 & 20 & 50 & 100 & 200 & 500 & 1,000 & 10,000\\
    \midrule
    \multirow{3}{*}{\rotatebox[origin=c]{90}{raw 5}} & Small & 0.14 & 0.11 & 0.21 & 0.26 & 0.31 & 0.40 & 0.47 & 0.73 \\
        & Medium &0.16 & 0.20 & 0.33 & 0.38 & 0.42 & 0.58 & 0.61 & 0.86 \\
        & Large &0.19 & 0.37 & 0.53 & 0.68 & 0.71 & 0.82 & 0.88 & 0.99 \\ 
    \midrule
    \multirow{3}{*}{\rotatebox[origin=c]{90}{pre 5}} & Small & 0.14 & 0.06 & 0.03 & 0.10 & 0.12 & 0.13 & 0.33 & 0.38 \\
        & Medium &0.16 & 0.16 & 0.22 & 0.43 & 0.41 & 0.42 & 0.60 & 0.57 \\
        & Large &0.19 & 0.30 & 0.53 & 0.64 & 0.77 & 0.77 & 0.90 & 0.92 \\ 
        \midrule
        \multirow{3}{*}{\rotatebox[origin=c]{90}{class 5}} & Small & 0.14 & 0.12 & 0.09 & 0.23 & 0.26 & 0.37 & 0.38 & 0.43 \\ 
        & Medium &0.16 & 0.18 & 0.12 & 0.32 & 0.37 & 0.42 & 0.57 & 0.64 \\ 
        & Large &0.19 & 0.24 & 0.33 & 0.53 & 0.69 & 0.72 & 0.81 & 0.87 \\ 
        \midrule
        \multirow{3}{*}{\rotatebox[origin=c]{90}{bin 5}} & Small & 0.01 & 0.06 & 0.19 & 0.26 & 0.31 & 0.24 & 0.34 & 0.69 \\ 
        & Medium &0.03 & 0.12 & 0.27 & 0.36 & 0.40 & 0.46 & 0.56 & 0.84 \\
        & Large &0.04 & 0.22 & 0.43 & 0.62 & 0.69 & 0.75 & 0.80 & 0.99 \\ 
    \bottomrule
  \end{tabular}
  \vspace{1em}
         \tiny
  \caption{Test power depending on shift frequency.}
  \label{tab:shift_perc}
  \centering
  \setlength\tabcolsep{2.65pt}
  \begin{tabular}{cccccccccc}
    \toprule
    \multirow{2}{*}[-2px]{Test} & \multirow{2}{*}[-2px]{Percentage} & \multicolumn{8}{c}{Number of samples from test} \\
    \cmidrule(r){3-10}
    &  & 10 & 20 & 50 & 100 & 200 & 500 & 1,000 & 10,000\\
    \midrule
    \multirow{3}{*}{\rotatebox[origin=c]{90}{raw 5}} & 10\% & 0.09 & 0.15 & 0.14 & 0.24 & 0.27 & 0.45 & 0.52 & 0.68 \\ 
        & 50\% &0.15 & 0.17 & 0.45 & 0.52 & 0.58 & 0.66 & 0.72 & 0.94 \\ 
        & 100\% &0.26 & 0.40 & 0.53 & 0.62 & 0.66 & 0.75 & 0.78 & 1.00 \\ 
    \midrule
    \multirow{3}{*}{\rotatebox[origin=c]{90}{pre 5}} & 10\% &0.15 & 0.17 & 0.31 & 0.28 & 0.19 & 0.41 & 0.45 & 0.53 \\ 
    & 50\%    &0.14 & 0.27 & 0.40 & 0.48 & 0.53 & 0.70 & 0.69 & 0.79 \\
       & 100\% &0.26 & 0.42 & 0.54 & 0.64 & 0.70 & 0.81 & 0.81 & 0.84 \\ 
    \midrule
        \multirow{3}{*}{\rotatebox[origin=c]{90}{class 5}} & 10\% &0.07 & 0.10 & 0.16 & 0.23 & 0.34 & 0.43 & 0.50 & 0.68 \\
        & 50\% &0.16 & 0.13 & 0.44 & 0.54 & 0.58 & 0.68 & 0.71 & 0.94 \\ 
        & 100\% &0.33 & 0.35 & 0.54 & 0.62 & 0.65 & 0.71 & 0.80 & 1.00 \\ 
    \midrule
    \multirow{3}{*}{\rotatebox[origin=c]{90}{bin 5}} & 10\% &0.02 & 0.08 & 0.12 & 0.22 & 0.23 & 0.26 & 0.32 & 0.66 \\ 
        & 50\% &0.02 & 0.08 & 0.29 & 0.51 & 0.58 & 0.61 & 0.69 & 0.91 \\
        & 100\% &0.05 & 0.26 & 0.52 & 0.56 & 0.66 & 0.66 & 0.76 & 1.00 \\ 
    \bottomrule
  \end{tabular}
    \end{subtable}%
\end{table}

\begin{table}[!ht]
    \caption{Shift detection on MNIST and CIFAR10 based on \citet{Rabanser2019Failing}. The performance of the 5-minute runtime was reported in \cref{tab:shift_results}. We additionally show the effect of varying the maximal runtime $t_\text{max}$. Furthermore, we report results using Auto-Sklearn \citep{feurer2020auto}.}
    \label{tab:shift_results_overview_app}
    \tiny
  \centering
  \setlength\tabcolsep{3pt}
  \tiny
  \begin{tabular}{cccccccccc}
    \toprule
    $t_\text{max}$ & \multirow{2}{*}{Test}  & \multicolumn{8}{c}{Number of samples from test} \\
    \cmidrule(r){3-10}
    & & 10 & 20 & 50 & 100 & 200 & 500 & 1,000 & 10,000\\
    \midrule

    \multirow{4}{*}{5} & {AutoML (raw)} & 0.17&	0.24	&0.37	&0.46	&0.50	&0.62	&0.67	&{0.87}\\
    & {AutoML (pre)} &0.18	&{0.29}	&0.42	&{0.47}	&0.47	&0.64	&0.65	&0.72\\
    & {AutoML (class)}&0.19	&0.19	&0.38	&0.46	&{0.52}	&0.61	&0.67	&{0.87}\\
    & {AutoML (bin)}&0.03	&0.14	&0.31	&0.43	&0.49	&0.51	&0.59	&0.86\\
    \midrule
    1 & AutoML (raw) & 0.19	& 0.21	&0.37	&0.46	&0.49	&0.60	&0.66	&0.81\\
    10& AutoML (raw) & 0.15	& 0.24	&0.38	&0.46	&0.51	&0.61	&0.67	&0.88\\
    \midrule
    10 & Auto-Sklearn (raw) & 0.10 & 0.18 & 0.28 & 0.38 & 0.43 & 0.38 & 0.43 & 0.49\\
    \bottomrule
  \end{tabular}
\end{table}

\subsection{In simple settings, simple tests should be used.}\label{app:simple}
As we mention in our discussion, if one can make strong assumptions about the data, using a classic test can be beneficial. To illustrate this, we run a toy experiment, where we test for equality in variance for two Gaussians with equal mean (and variance 1.0 and 1.5 respectively). We run our AutoML test and compare the performance to an F-test of equal variance (which uses the assumption of normality). For sample sizes 50, 100, 500 we obtain test power of AutoML as 0.15, 0.43, 0.97 and for the F-test as 0.88, 0.97, 1.0 (estimated over 100 trials, at significance level 5\%).

\end{document}